\newif\ifsubmit  
\title{  
Return to Tradition: Learning Reliable Heuristics with \\
Classical Machine Learning  
}  
\author{  
Dillon Z. Chen$^{1,2}$, Felipe Trevizan$^{2}$, Sylvie Thi\'ebaux$^{1,2}$\\
}
\DeclareMathOperator*{\pre}{pre}
\DeclareMathOperator*{\add}{add}
\DeclareMathOperator*{\del}{del}
\newclass{\N}{N}
\newclass{\CountingLogic}{C}
\newclass{\coNTIME}{coNTIME}
\newclass{\coNSPACE}{coNSPACE}
\newclass{\coNPSPACE}{coNPSPACE}
\newclass{\EXPTIME}{EXPTIME}
\newclass{\NEXPTIME}{NEXPTIME}
\newclass{\coNEXPTIME}{coNEXPTIME}
\newclass{\NEXPSPACE}{NEXPSPACE}
\newclass{\coNEXPSPACE}{coNEXPSPACE}
\newclass{\ASPACE}{ASPACE}
\newclass{\ATIME}{ATIME}
\newclass{\APSPACE}{APSPACE}
\newclass{\AEXPTIME}{AEXPTIME}
\newclass{\AEXPSPACE}{AEXPSPACE}
\newtheorem{theorem}{Theorem}[section]
\newtheorem{corollary}[theorem]{Corollary}
\theoremstyle{definition}
\newtheorem{definition}[theorem]{Definition}
\def\N{\mathbb{N}}
\def\R{\mathbb{R}}
\renewcommand{\phi}{\varphi}
\def\la{\leftarrow}
\newcommand{\abs}[1]{\left| #1 \right|}
\newcommand{\gen}[1]{\left< #1 \right>}
\newcommand{\set}[1]{\left\{ #1 \right\}}
\newcommand{\seta}[1]{\{ #1 \}}
\newcommand{\setbig}[1]{\bigl\{ #1 \bigr\}}
\newcommand{\mset}[1]{\left\{ \!\! \left\{ #1 \right\} \!\! \right\}}
\newcommand{\mseta}[1]{\{ \!\! \{ #1 \} \!\! \}}
\newcommand{\msetaa}[1]{\{ \!\! \{ \! #1 \! \} \!\! \}}
\newcommand{\lr}[1]{\left( #1 \right)}
\newcommand{\Biglr}[1]{\Bigl( #1 \Bigr)}
\newcommand{\header}[1]{\rotatebox[origin=l]{90}{\hspace*{-0.22cm} #1}}
\newcommand{\colorofcell}{blue}
\newcommand{\comparisonentry}[1]{{\tiny{#1}}}
\newcommand{\first}[2]{\cellcolor{\colorofcell!30}{{\textbf{#1}}\comparisonentry{#2}}}
\newcommand{\second}[2]{\cellcolor{\colorofcell!20}{{#1}\comparisonentry{#2}}}
\newcommand{\third}[2]{\cellcolor{\colorofcell!10}{{#1}\comparisonentry{#2}}}
\newcommand{\normalcell}[2]{{{#1}\comparisonentry{#2}}}
\newcommand{\zerocell}[1]{-}
\newcolumntype{Y}{>{\raggedleft\arraybackslash}X}
\newcommand{\tablesize}{\scriptsize}
\newcommand{\significant}[1]{\textbf{#1}}
\newcommand{\notsignificant}[1]{\emph{#1}}
\newcommand{\highcorr}[1]{\cellcolor{green!30}{#1}}
\newcommand{\medcorr}[1]{\cellcolor{green!15}{#1}}
\newcommand{\lowcorr}[1]{\cellcolor{gray!10}{#1}}
\newcommand{\lefttodo}[2][]{{%
 \let\marginpar\marginnote
 \reversemarginpar
 \renewcommand{\baselinestretch}{0.8}%
 \todo[#1]{#2}}
}
\newcommand{\predicates}{\mathcal{P}}
\newcommand{\objects}{\mathcal{O}}
\newcommand{\schemata}{\mathcal{A}}
\newcommand{\neighbour}{\mathcal{N}}
\newcommand{\edge}[1]{\gen{#1}}
\definecolor{caribbeangreen}{rgb}{0.0, 0.8, 0.6}
\definecolor{brilliantlavender}{rgb}{0.96, 0.73, 1.0}
\definecolor{amethyst}{rgb}{0.6, 0.4, 0.8}
\definecolor{ao(english)}{rgb}{0.0, 0.5, 0.0}
\definecolor{arylideyellow}{rgb}{0.91, 0.84, 0.42}
\definecolor{asparagus}{rgb}{0.53, 0.66, 0.42}
\definecolor{aquamarine}{rgb}{0.5, 1.0, 0.83}
\definecolor{babyblue}{rgb}{0.54, 0.81, 0.94}
\definecolor{fwtchanged}{rgb}{0.3, 0.3, 0.7}
\definecolor{rosewood}{rgb}{0.4, 0.0, 0.04}
\definecolor{oldmauve}{rgb}{0.4, 0.19, 0.28}
\definecolor{myrtle}{rgb}{0.13, 0.26, 0.12}
\definecolor{magenta(dye)}{rgb}{0.79, 0.08, 0.48}
\definecolor{plta}{rgb}{0.12, 0.47, 0.71}
\definecolor{pltb}{rgb}{   1, 0.5, 0.05}
\definecolor{pltc}{rgb}{0.17, 0.63, 0.17}
\definecolor{pltd}{rgb}{0.84, 0.15, 0.16}
\newcommand{\hash}{\text{hash}}
\newcommand{\hff}{$h^{\text{FF}}$}
\newcommand{\countfunction}{\texttt{count}}
\newcommand{\Domain}{\mathcal{D}}
\newcommand{\trainingset}{\mathcal{T}_{\Domain}}
\newcommand{\ff}{\hff}
\newcommand{\muninn}{\text{Muninn}}
\newcommand{\gpr}{GPR}
\newcommand{\DescriptionLogic}{DLF}
\newcommand{\lama}{LAMA-first}
\newcommand{\ilg}{\text{ILG}}
\newcommand{\WL}{\mathcal{WLF}^{\ilg{}}}
\newcommand{\GNN}{\mathcal{GN\!N}^{\ilg{}}}
\newcommand{\DL}{\mathcal{DLF}}
\newcommand{\MUNINN}{\muninn}
\newcommand{\params}{\mathbf{\Theta}}
\newcommand{\paramsTwo}{\mathbf{\Phi}}
\newcommand{\svr}{SVR}
\newcommand{\svrOne}{\svr}
\newcommand{\svrInf}{\svr$_{\infty}$} 
\newcommand{\lwlTwo}{\svr$_{\text{2-LWL}}$}
\newcommand{\gprNew}{{\gpr}}
\newcommand{\setofcolours}{\mathcal{C}}  
\newcommand{\featurevec}{\vec{v}}
\newcommand{\mlp}{\textbf{MLP}}
\newcommand{\seeded}{$^\ddagger$}
\newcommand{\gooseMax}{GOOSE$_{\text{max}}$}
\newcommand{\gooseMean}{GOOSE$_{\text{mean}}$}
\renewcommand{\arraystretch}{0.75}
\renewcommand{\arraystretch}{1}
\newcommand{\probliftedgeneral}[2]{\langle\predicates, \objects, \schemata, #1, #2\rangle}
\newcommand{\problifted}{\probliftedgeneral{s_0}{G}}
\renewcommand{\KwData}[1]{}
\renewcommand{\KwResult}[1]{}
\newcommand{\mug}{\text{MuG}} 
\newcommand{\GNNa}{\mathcal{GN\!N}}
\begin{document}

\maketitle

\begin{abstract}
    Current approaches for learning for planning have yet to achieve competitive performance against classical planners in several domains, and have poor overall performance. In this work, we construct novel graph representations of lifted planning tasks and use the WL algorithm to generate features from them. These features are used with classical machine learning methods which have up to 2 orders of magnitude fewer parameters and train up to 3 orders of magnitude faster than the state-of-the-art deep learning for planning models. Our novel approach, WL-GOOSE, reliably learns heuristics from scratch and outperforms the $h^{\text{FF}}$ heuristic in a fair competition setting. It also outperforms or ties with LAMA on 4 out of 10 domains on coverage and 7 out of 10 domains on plan quality. WL-GOOSE is the first learning for planning model which achieves these feats. Furthermore, we study the connections between our novel WL feature generation method, previous theoretically flavoured learning architectures, and Description Logic Features for planning.
\end{abstract}

\section{Introduction}

Learning for planning has regained traction in recent years due to advancements in deep learning (DL) and neural network architectures.
The focus of learning for planning is to learn domain knowledge in an automated, domain-independent fashion in order to improve the computation and/or quality of plans.
Recent examples of learning for planning methods using DL include learning policies~\cite{toyer2018action,groshev2018learning,garg:etal:2019,rivlin2020generalized,silver:etal:2024}, heuristics~\cite{shen:etal:2020,karia:srivastava:2021} and heuristic proxies~\cite{shen:etal:2019,ferber:etal:2022,chrestien:etal:2023}, with more recent architectures motivated by theory~\cite{stahlberg:etal:2022,stahlberg:etal:2023,mao:etal:2023,chen:etal:2024,horcik:sir:2024}.
However, learning for planning is not a new field and works capable of learning similar domain knowledge using classical statistical machine learning (SML) methods predate DL.
For instance, learning heuristic proxies using support vector machines (SVMs)~\cite{garrett2016learning}, policies using reinforcement learning~\cite{buffet2009factored} and decision lists~\cite{yoon2002inductive}.
We refer to \cite{jimenez2012review} for a more comprehensive overview of classical SML methods.

Unfortunately, all deep learning for planning architectures have yet to achieve competitive performance against classical planners and suffer from a variety of issues including (1) a need to tune a large number of hyperparameters, (2) lack of interpretability and (3) being both data and computationally intensive.
In this paper, we introduce WL-GOOSE, a novel approach for learning for planning that takes advantage of the efficiency of SML-based methods for overcoming all these issues.
WL-GOOSE uses a new graph representation for lifted planning tasks. However, differently from several DL-based methods, we do not use GNNs to learn domain knowledge and use \textit{graph kernels} instead.
More precisely, we use a modified version of the Weisfeiler-Leman algorithm for generating features for graphs~\cite{shervashidze:etal:2011} which can be used to train SML models.
Another benefit of WL-GOOSE is its support for various learning targets, such as heuristic values or policies, without the need for backpropagation to generate features as in DL-based approaches.
We also provide a comprehensive theoretical comparison between our approach, GNNs for learning planning domain knowledge, and Description Logic Features for planning~\cite{martin:geffner:2000}.

To demonstrate the potential of WL-GOOSE, we applied it to learn domain-specific heuristics using two classical SML methods: SVMs and Gaussian Processes (GPs).
We evaluated the learned heuristics against the state-of-the-art learning for planning models on the 2023 International Planning Competition Learning Track benchmarks~\cite{seipp:segovia:2023}.
The learned heuristics generalise better than previous DL-based methods while also being more computationally efficient: our models took less than 15 seconds to train which is up to 3 orders of magnitude times faster than GNNs which train on GPUs.
Furthermore, some of our models were trained in a deterministic fashion with minimal parameter tuning, unlike DL-based approaches which require stochastic gradient descent and tuning of various hyperparameters, and have up to 2 orders of magnitude times fewer learned parameters.
When used with greedy best-first search, our learned heuristic models achieved higher total coverage than \hff{}~\cite{hoffmann:nebel:2001} and vastly outperforms all previous learning for planning models.
Moreover, our learned SVM and GP models outperformed or tied with LAMA~\cite{richter:westphal:2010} on 4 out of 10 domains with regards to coverage, and 7 out of 10 domains for plan quality.
These results make our learned heuristics using WL-GOOSE the first ones to surpass the performance of \hff{} and the best performing learned heuristics against LAMA.

\section{Background and Notation}

\subsubsection{Planning}
A classical planning task~\cite{geffner:bonet:2013} is a state transition model
given by a tuple $\Pi = \gen{S, A, s_0, G}$ where $S$ is a set of states, $A$ a
set of actions, $s_0 \in S$ an initial state and $G \subseteq S$ a set of goal
states. 
An action $a \in A$ is a function $a: S \to S\cup \bot$ where $a(s)=\bot$
indicates that action $a$ is not applicable at state $s$, and otherwise, $a(s)$
is the successor state when $a$ is applied to $s$. 
An action has an associated cost $c(a) \in \R_{\geq 0}$. 
A solution or \emph{plan} for this model is a sequence of actions $\pi = a_1
,\ldots, a_n$ where $s_i = a_i(s_{i-1}) \not=\bot$ for $i=1,\ldots,n$
and $s_n \in G$. 
In other words, a plan is a sequence of applicable actions which progresses the
initial state to a goal state when executed. 
The cost of a plan $\pi$ is the sum of its action costs: $c(\pi) = \sum_{i=1}^n
c(a_i)$. 
A planning task is \emph{solvable} if there exists at least one plan.  
A plan is \emph{optimal} if there does not exist any other plan with 
strictly lower cost.

We represent planning tasks in a compact form which does not require
enumerating all states and actions. 
A \emph{lifted planning task}~\cite{lauer:etal:2021} is a tuple $\Pi =
\problifted$ where $\predicates$ is a set of first-order predicates, 
$\objects$ a set of objects, 
$\schemata$ a set of action schemas, 
$s_0$ the initial state, 
and $G$ the goal condition. 
A predicate $P \in \predicates$ has a set of parameters $x_1, \ldots, x_{n_P}$
where $n_P \in \N$ depends on $P$, and it is possible for a predicate to have
no parameters. 
A ground proposition is a predicate which is instantiated by assigning all of
the $x_i$ with objects from $\objects$ or other defined variables. 
An action schema $a \in \schemata$ is a tuple $\gen{\Delta(a), \pre(a),
\add(a), \del(a)}$ where $\Delta(a)$ is a set of parameter variables, and the
preconditions $\pre(a)$, add effects $\add(a)$, and delete effects $\del(a)$
are sets of predicates from $\predicates$ instantiated with elements from
$\Delta(a) \cup \objects$. 
Each action schema has an associated cost $c(a) \in \R_{\geq 0}$.
An action is an action schema where each variable is instantiated with an
object.
A domain $\Domain$ is a set of lifted planning tasks which share the same sets
of predicates $\predicates$ and action schemas~$\schemata$.

In a lifted planning task, states are represented as sets of ground propositions. 
The following are sets of ground propositions: states, goal condition, and
the preconditions, add effects, and delete effects of all actions. 
An action $a$ is applicable in a state $s$ if $\pre(a) \subseteq s$, in which
case we define $a(s)=(s \setminus \del(a)) \cup \add(a)$. Otherwise
$a(s) = \bot$. The cost of an action is given by the cost of its corresponding
action schema. A state $s$ is a goal state if $G \subseteq s$.

A heuristic is a function $h\!:\!S \!\to\! \R \!\cup\! \set{\infty}$ which maps a state
into a number representing an estimate of the cost of the optimal plan to
the goal, or $\infty$ representing the state is unsolvable. 
A heuristic can be defined on problems by evaluating their initial state: $h(\Pi) = h(s_0)$. 
The optimal heuristic $h^*$ returns for each state $s$ the cost of the optimal
plan to the goal if the problem is solvable from $s$, and $\infty$ otherwise.

\subsubsection{The Weisfeiler-Leman algorithms}
We write $\gen{V, E, c, l}$ for a graph with coloured nodes and edges, where
$V$ is a set of nodes, $E \subseteq {{V}\choose{2}}$ is a set of undirected
edges, $c:V \to \Sigma_V$ maps nodes to a set of colours $\Sigma_V$, and $l: E
\to \Sigma_E$ maps edges to a set of colours $\Sigma_E$. 
The edge neighbourhood of a node $u$ under edge colour $\iota$ is
$\neighbour_{\iota}(u) = \set{e=\edge{u,v}=\edge{v,u} \in E \mid l(e) =
\iota}$. 
The neighbourhood of a node $u$ in a graph is $\neighbour(u) = \bigcup_{\iota
\in \Sigma_E} \neighbour_{\iota}(u)$.

\begin{algorithm}[t]
    \caption{WL algorithm}\label{alg:wl}  
    \KwData{A graph $G=\gen{V, E, c}$ and number of iterations $h$.}  
    \KwResult{Multiset of colours.}  
    $c^{0}(v) \la c(v), \forall v \in V$ \label{line:wl:init}\\
    \For{$j=1,\ldots,L$ \normalfont{\textbf{do for}} $v \in V$}{ 
      $c^{j}(v) \la 
      \hash
      \lr{c^{j-1}(v), \mseta{c^{j-1}(u) \mid u \in \neighbour(v)}}  
      $ \label{line:wl:update}
    } 
    \Return{$\bigcup_{j=0,\ldots,L}\mseta{c^{j}(v) \mid v \in V}$} \label{line:wl:return}
\end{algorithm}

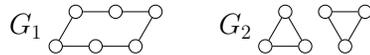
\begin{figure} \centering \resizebox{0.28\textwidth}{!}{%
\newcommand{\wlfontsize}{\Huge}
\begin{tikzpicture} 
\node (aaa) at (-1.5, 1) {\wlfontsize $G_1$};
\newcommand{\shift}{-4.25}
\node[draw, circle] (A) at (3.5+\shift,0.5) {} ; 
\node[draw, circle] (B) at (4.5+\shift,0.5) {} ; 
\node[draw, circle] (C) at (4+\shift,1.366) {} ; 
\node[draw, circle] (D) at (5.5+\shift,0.5) {} ; 
\node[draw, circle] (E) at (5+\shift,1.366) {} ; 
\node[draw, circle] (F) at (6+\shift,1.366) {} ;

\draw (A) -- (C) -- (E) -- (F) -- (D) -- (B) -- (A) -- cycle;

\renewcommand{\shift}{1}
\node (aaa) at (2.75+\shift, 1) {\wlfontsize $G_2$}; \node[draw, circle] (G) at (3.5+\shift,0.5) {} ; 
\node[draw, circle] (H) at (4.5+\shift,0.5) {} ; 
\node[draw, circle] (I) at (4+\shift,1.366) {} ; 
\node[draw, circle] (J) at (5.5+\shift,0.5) {} ; 
\node[draw, circle] (K) at (5+\shift,1.366) {} ; 
\node[draw, circle] (L) at (6+\shift,1.366) {} ;

\draw (G) -- (H) -- (I) -- (G) -- cycle; 
\draw (J) -- (K) -- (L) -- (J) -- cycle;
\end{tikzpicture}
} \caption{Two non-isomorphic graphs $G_1$ (6-cycle) and $G_2$ (two disjoint
3-cycles) which the WL algorithm returns the same outputs, thus failing to
distinguishing them.}\label{fig:example}
\end{figure}

We only focus on the WL algorithm which is a special case of the class of
$k$-Weisfeiler-Leman ($k$-WL) algorithms~\cite{lehman:weisfeiler:1968}.
The $k$-WL algorithms were originally constructed to provide tests for whether
pairs of graphs are isomorphic or not.
The $k\!+\!1$-WL algorithm subsumes the $k$-WL algorithm as it can distinguish
a greater class of non-isomorphic graphs, and furthermore is in correspondence
with $k$-variable counting logics~\cite{cai:etal:1992}.
However, the complexity of the $k$-WL algorithms is exponential in $k$.

The WL algorithm takes as input graphs without edge colours, i.e.  $\forall
e\in E, l(e) = 0$, and outputs a canonical form in terms of a multiset of
colours, a set which is allowed to have duplicate elements.
It has also been used to construct a kernel for
graphs~\cite{shervashidze:etal:2011} which converts the multiset of colours in
the WL algorithm into a feature vector and then uses the simple dot product
kernel.
We denote a multiset of elements by $\mset{\ldots}$.

The WL algorithm is shown in Alg.~\ref{alg:wl} which takes as input a graph $G$
with coloured nodes only and a predefined number of WL iterations $L$.
The algorithm begins by initialising the current colours of each node with the
initial node colours.
If no node colours are given in the graph, we can set them to 0.
Line~\ref{line:wl:update} updates the colour of each node $v$ by iteratively
collecting the current colours of its neighbors in a multiset and then hashing
this multiset and $v$'s current colour into a colour using an injective
$\hash(\cdot,\cdot)$ function.
In practice, $\hash$ is built lazily by using a map data structure and
multisets are represented as sorted strings.
Line~\ref{line:wl:return} returns a multiset of the node colours seen over all iterations.

If the WL algorithm outputs two different multisets for two graphs $G_1$ and
$G_2$, then the graphs are non-isomorphic.
However, if the algorithm outputs the same multisets for two graphs we cannot
say for sure whether they are isomorphic or not.
The canonical example illustrating this case is in Fig.~\ref{fig:example} where
the two graphs are not isomorphic but the WL algorithm returns the same output
for both graphs since it views all nodes as the same because they have degree
2.

\section{WL Features for Planning} \label{sec:method}

In this section we describe how to generate features for planning states in order to learn heuristics.
The process involves three main steps:
(1) converting planning states into graphs with coloured nodes and edges,
(2) running a variant of the WL algorithm on the graphs in order to generate features, and then
(3) training a classical machine learning model for predicting heuristics using the obtained features.
We start by defining the Instance Learning Graph (\ilg{}), a novel representation for lifted planning tasks.

\newcommand{\xxshift}{1.5cm}
\newcommand{\yyshift}{0.8cm}
\newcommand{\csize}{0.8cm}
\newcommand{\descccsize}{\footnotesize}
\newcommand{\ilgword}[1]{\text{\emph{\descccsize{#1}}}}
\newcommand{\desccccc}{5cm}
\newcommand{\desccc}{6cm}
\newcommand{\anchor}{west}
\begin{figure}
\centering
\begin{tikzpicture}[thick,
every node/.style={  
    draw,  
    rounded corners,  
}  
]
\node[fill=caribbeangreen] (onab) at (-2.5/1.8*\xxshift,\yyshift) {
$\ilgword{on(a,b)}$
};
\node[fill=yellow] (onca) at ( 2.5/1.8*\xxshift,\yyshift) {
$\ilgword{on(c,a)}$
};
\node[fill=caribbeangreen] (onbc) at ( 0,\yyshift) {
$\ilgword{on(b,c)}$
};
\node[fill=babyblue] (a) at (  -2.5/1.8*\xxshift,2*\yyshift) {
$\ilgword{a}$
};
\node[fill=babyblue] (b) at (   0/1.8*\xxshift,2*\yyshift) {
$\ilgword{b}$
};
\node[fill=babyblue] (c) at (   2.5/1.8*\xxshift,2*\yyshift) {
$\ilgword{c}$
};


\path [-,draw=pltb] (b.south) edge (onab.north);
\path [-,draw=pltb] (c.south) edge (onbc.north);
\path [-,draw=pltb] (a.south) edge (onca.north);
\path [-,draw=plta] (c.south) edge (onca.north);
\path [-,draw=plta] (a.south) edge (onab.north);
\path [-,draw=plta] (b.south) edge (onbc.north);
\end{tikzpicture}
\caption{\ilg{} subgraph of facts and goal condition corresponding to the $\emph{on}$ predicate of a Blocksworld instance. The current state says that $a$ stacked on $b$, which is stacked on $c$, and the goal condition is for $c$ to be stacked on $a$.}\label{fig:llg}
\end{figure}
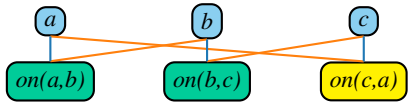
\newcommand{\objectsymbol}{\texttt{ob}} \newcommand{\truesymbol}{\texttt{ap}}
\newcommand{\goalsymbol}{\texttt{ug}} \newcommand{\truegoalsymbol}{\texttt{ag}}
\begin{definition}\label{def:llg} The \emph{instance learning graph (\ilg{})}
of a lifted planning problem $\Pi=\problifted$ is the graph $G=\gen{V, E, c,
l}$ with

$\bullet$ $V = \objects \cup s_0 \cup G$

$\bullet$ $E =
\bigcup_{ p=P(o_1,\ldots,o_{n_P}) \in s_0 \cup G }
\setbig{ \edge{p, o_1},\ldots,\edge{p, o_{n_P}} }$

$\bullet$ $c: V \to (\set{\truesymbol, \goalsymbol, \truegoalsymbol} \times
\predicates) \cup \set{\objectsymbol}$ defined by
\begin{align*}
u \mapsto
\begin{cases}
\objectsymbol, &\text{if $u \in \objects$;} \\
(\truegoalsymbol, P), &\text{if $u=P(o_1,\ldots,o_{n_P}) \in s_0 \cap G$;} \\
(\truesymbol, P), &\text{if $u=P(o_1,\ldots,o_{n_P}) \in s_0 \setminus G$;} \\
(\goalsymbol, P), &\text{if $u=P(o_1,\ldots,o_{n_P}) \in G \setminus s_0$;}
\end{cases}
\end{align*}

$\bullet$ $l: E \to \N$ with $\edge{p, o_i} \mapsto i.$
\end{definition}

Fig.~\ref{fig:llg} provides an example of an \ilg{}.
An ILG consists of a node for each object and the union of propositions that are true in the state $s_0$ and the goal condition $G$.
A proposition is connected to the $n$ object nodes which instantiates the proposition.
The labels of the $n$ edges correspond to the position of the object in the predicate argument.
The colours of the nodes indicate whether the node corresponds to an object ($\objectsymbol$), or determines whether it is a proposition belonging to $s_0$ ($\truesymbol$) or $G$ ($\goalsymbol$) only or both ($\truegoalsymbol$), as well as its corresponding predicate. 
Hence $\goalsymbol$ stands for unachieved goal, $\truegoalsymbol$ for achieved goal, and $\truesymbol$ for achieved (non-goal) proposition.
Note that ILGs are agnostic to the transition system of the planning task as they ignore action schemas and actions.

Since ILGs have coloured edges, we need to extend the WL algorithm to account for edge colours to generate features for ILGs.
Our modified WL algorithm is obtained by replacing Line 3
in Alg.~\ref{alg:wl} with the update function
\begin{align*}
c^{j}(v)  \la  \hash
\Biglr{
c^{j-1}(v),  \bigcup_{\iota \in \Sigma_E}  \!\!\!
\mset{
    (c^{j-1}(u), \iota) \mid  u \in \neighbour_{\iota}(v)
}
},
\end{align*}

\noindent
where the union of multisets is itself a multiset.
Note that edge colours do not update during this modified WL algorithm.
It is possible to run a variant of the WL algorithm which modifies edge colours but this comes at an additional computational cost given that usually $\abs{E} \gg \abs{V}$.

Now that ILGs can be represented as multisets of colours, we can generate features by representing these multisets as histograms~\cite{shervashidze:etal:2011}.
The feature vector of a graph is a vector $v$ with a size equal to the number of observed colours during training, where $v[\kappa]$ counts how many times the WL algorithm has encountered colour $\kappa$ throughout its iterations.
Formally, let $G_1=\gen{V_1, E_1, c_1, l_1},\ldots,G_n=\gen{V_n, E_n, c_n, l_n}$ be the set of training graphs.
Then the colours the WL algorithm encounters in the training graphs are given by
\begin{align*}
\setofcolours = \seta{c_i^{j}(v) \mid i \in \{1,\ldots,n\}; j \in \{0,\ldots,h\}; v \in V_i} 
\end{align*}
where $c_i^{j}(v)$ is the colour of node $v$ in graph $G_i$ during the $j$-th iteration of WL for $j>0$ and $c_i^{0}(v) = c_i(v)$.
Given a planning task $\Pi$ and the set of colours $\setofcolours$ observed during training,
$\Pi$'s feature vector representation $\vec{v} \in \R^{|\setofcolours|}$ is $\featurevec=[\countfunction_{\setofcolours}(\Pi, \kappa_1),\dots,\countfunction_{\setofcolours}(\Pi, \kappa_{|\setofcolours|})]$ where $\countfunction_{\setofcolours}(\Pi, \kappa)$ is the number of times the colour $\kappa \in \setofcolours$ is present in the output of the WL algorithm on the ILG representation of $\Pi$.
There is no guarantee that $\setofcolours$ contains all possible observable colours for a given planning domain and colours not in $\setofcolours$ observed after training are ignored.

\section{Theoretical Results} \label{sec:connections}

In this section, we investigate the relationship between WL features and features generated using message passing graph neural network (GNN), description logic features (\DescriptionLogic{}) for planning~\cite{martin:geffner:2000} and the features used by Muninn~\cite{stahlberg:etal:2022,stahlberg:etal:2023}, a theoretically motivated deep learning model.
Fig.~\ref{fig:expressivity} summarises our theoretical results.


\begin{figure}
\centering  
\newcommand{\tikzlengthx}{3}  
\newcommand{\tikzlengthy}{1.1}  
\resizebox{0.95\columnwidth}{!}{%
\begin{tikzpicture}[  
    every node/.style={  
        align=center,  
        draw,  
        rounded corners,  
        minimum height=0.8cm,  
        minimum width=1.6cm,   
    }  
]  

\node at (+\tikzlengthx, \tikzlengthy) (DL) {$\DL$};  
\node at (0*\tikzlengthx, \tikzlengthy) (WL) {$\WL_{\phantom{\ilg{}}}$};  
\node at (-\tikzlengthx, \tikzlengthy) (GNN) {$\GNN_{\phantom{\ilg{}}}$};  
\node at (-\tikzlengthx, 0) (Muninn) {$\MUNINN$};  

\draw[dashed]   
(WL) --  
node[midway, xshift=0em, yshift=1em, draw=none] {$\not=$}   
node[midway, xshift=0em, yshift=2em, draw=none] {Thm.~\ref{thm:wl_dl}}   
(DL);  

\draw[<->]   
(WL) --  
node[midway, xshift=0em, yshift=2em, draw=none] {Thm.~\ref{thm:wl_gnn}}   
node[midway, xshift=0em, yshift=0.85em, draw=none] {$=$}   
(GNN);

\draw[<-]   
(GNN) to[out=180, in=180]
node[midway, xshift=-2.5em, yshift=-0.5em, draw=none] {$\subsetneq$}   
node[midway, xshift=-2.5em, yshift= 0.5em, draw=none] {Thm.~\ref{thm:gnn_muninn}}   
(Muninn);

\draw[->]   
(Muninn.east) to[out=0, in=270] 
node[midway, xshift=4.5em, yshift=-0.5em, draw=none] {$\subsetneq$}   
node[midway, xshift=4.5em, yshift= 0.5em, draw=none] {Cor.~\ref{thm:wl_muninn}}   
(WL.south);


\end{tikzpicture}
}
\caption{  
    Expressivity hierarchy of WL, GNN and DL generated features for planning. 
}  
\label{fig:expressivity}
\end{figure}
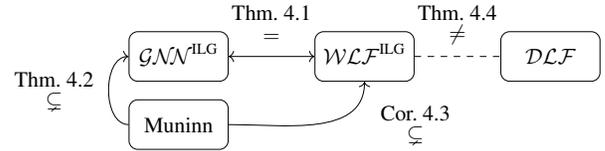

We begin with some notation.
Let $\Domain$ represent the set of all problems in a given domain.
We define $\WL_{\params}\!:\!\Domain \to \R^d$ as the WL feature generation
function described in Sec.~\ref{sec:method} which runs the WL algorithm on the
ILG representation of planning tasks.
We denote $\params$ the set of parameters of the function which includes the
number of WL iterations and the set of colours $\setofcolours$ with size $d$ observed during training.
We similarly denote parametrised GNNs acting on ILG representations of
planning tasks by $\GNN_{\params}\!:\!\Domain \to \R^d$.
Parameters for GNNs include number of message passing layers, the message
passing update function with fixed weights, and the aggregation function.

We denote \DescriptionLogic{} generators~\cite{martin:geffner:2000} by $\DL_{\params}\!:\!\Domain \to
\R^d$ where the parameters for $\DL$ include the maximum complexity length of
its features.
\DescriptionLogic{}s have been used in several areas of learning for planning including learning descending dead-end avoiding heuristics~\cite{frances:etal:2019}, unsolvability heuristics~\cite{staahlberg:etal:2021} and policy sketches~\cite{bonet:etal:2019,drexler:etal:2022}.
Lastly, we denote the architecture from~\citet{stahlberg:etal:2022} for
generating features by $\MUNINN_{\params}\!:\!\Domain \to \R^d$.
We omit their final MLP layer which transforms the vector feature into a
heuristic estimate.
Furthermore in our theorems, we ignore their use of random node initialisation
(RNI)~\cite{abboud:etal:2021}.
The original intent of RNI is to provide a universal approximation theorem for
GNNs but the practical use of the theorem is limited by the assumption of
exponential width layers and absence of generalisation results.
Parameters for $\MUNINN$ include hyperparameters for their GNN architecture and
learned weights for their update functions.

In all of the aforementioned models, the parameters
$\params$ consist of a combination of model hyperparameters and trained
parameters based on a training set $\trainingset \subseteq \Domain$.
The expressivity and distinguishing power of a feature generator for planning determines if it can theoretically learn $h^*$ for larger subsets of planning tasks.
We begin with an application of a well-known result connecting the expressivity
of the WL algorithm and GNNs for distinguishing graphs~\cite{xu:etal:2019} by extending it to edge-labelled graphs.

\begin{theorem}
    [$\WL$ and $\GNN$ have the same power at distinguishing planning tasks.]  
    \label{thm:wl_gnn}
    Let $\Pi_1$ and $\Pi_2$ be any two planning tasks from a given domain.
    If for a set of parameters $\params$ we have that
    $\GNN_{\params}(\Pi_1)\not=\GNN_{\params}(\Pi_2)$, then there exists a
    corresponding set of parameters $\paramsTwo$ such that
    $\WL_{\paramsTwo}(\Pi_1)\not=\WL_{\paramsTwo}(\Pi_2)$.
    Conversely for all $\paramsTwo$ such that $\WL_{\paramsTwo}(\Pi_1) \not=
    \WL_{\paramsTwo}(\Pi_2)$, there exists $\params$ such that
    $\GNN_{\params}(\Pi_1)\not=\GNN_{\params}(\Pi_2)$.
\end{theorem}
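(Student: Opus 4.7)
The plan is to adapt the Xu et al.\ argument connecting $1$-WL and GNN expressivity to the edge-labelled setting of ILGs, proceeding by induction on the layer/iteration depth and handling both directions of the bi-conditional separately. Since the theorem is an existence statement about matching parameter sets, I would first fix notation: write a GNN layer as an aggregate-then-update rule that groups neighbours by edge colour $\iota \in \Sigma_E$, write the WL iteration in the form of the modified update in Sec.~\ref{sec:method}, and assume both models finish with a sum readout over node embeddings (in the WL case this is exactly the histogram $\featurevec$ used by $\WL$).

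For the direction ``$\GNN_{\params}(\Pi_1) \ne \GNN_{\params}(\Pi_2)$ implies some $\WL_{\paramsTwo}$ separates them'', I would show by induction on $j \in \set{0,\ldots,L}$ that WL refinement is at least as fine as GNN refinement at every depth: if two nodes $v_1, v_2$ (possibly in different ILGs) receive the same WL colour after $j$ iterations, then they must receive the same GNN embedding after $j$ layers. The base case holds because the initial ILG colouring $c$ already encodes the object/predicate/status information the GNN sees at layer $0$. The inductive step uses the fact that the WL update is an injective hash of $(c^{j-1}(v), \bigcup_{\iota} \mseta{(c^{j-1}(u),\iota) \mid u \in \neighbour_\iota(v)})$, which is at least as informative as any deterministic edge-labelled message-passing layer. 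Taking $L$ to be the GNN depth and composing with the sum readout, equal WL feature vectors force equal GNN outputs, which is the contrapositive of the claim.

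For the converse direction, I would fix $\paramsTwo$ separating $\Pi_1$ and $\Pi_2$ and construct a GNN that simulates the $L$ WL iterations. Since WL produces only finitely many colours on any fixed pair of finite ILGs, I would follow the GIN-style construction: sum aggregation combined with a sufficiently wide MLP realises an injective map on finite multisets, and edge labels are accommodated either by maintaining per-edge-label weight matrices summed separately or by concatenating a one-hot edge-label embedding into each message so that the partition into $\neighbour_\iota$ is preserved. Chaining $L$ such layers with an initial embedding of $c$ and finishing with sum-pooling yields a $\GNN_\params$ whose output is a one-to-one re-encoding of $\WL_{\paramsTwo}$, and so separates $\Pi_1$ and $\Pi_2$ as required.

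The main obstacle is verifying the edge-labelled extension of the GIN universality lemma, since the original result addresses multisets of node features, whereas the modified WL update consumes a union of multisets indexed by edge colour. I would resolve this by regarding each pair $(c^{j-1}(u),\iota)$ as a single compound symbol in the product alphabet $\Sigma_V \times \Sigma_E$, which reduces edge-labelled aggregation to ordinary multiset aggregation over a larger but still finite alphabet, and then invoking the standard lemma. A minor subtlety worth flagging at the outset is that the colour set $\setofcolours$ is training-dependent, so the two directions of the theorem should be read as matching parameter choices that are instantiated per pair $(\Pi_1,\Pi_2)$ rather than uniformly over the whole domain $\Domain$.
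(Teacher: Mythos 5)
Your proposal is correct and follows essentially the same route as the paper: the forward direction is the edge-labelled adaptation of Xu et al.'s Lemma~3 (which you reprove by induction on depth rather than cite), and the converse is the GIN-style injective-aggregation construction of their Corollary~6, with edge labels handled by per-label MLPs / compound symbols in $\Sigma_V \times \Sigma_E$, which is the same device the paper uses (one MLP per edge label with disjoint ranges summed at each layer). Your explicit reduction of edge-labelled aggregation to ordinary multiset aggregation over a finite product alphabet is a slightly more careful justification of the step the paper only sketches, but it is not a different argument.
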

\begin{proof}[Proof]
    [$\subseteq$] 
    The forward statement follows from~\cite[Lemma 3]{xu:etal:2019} which
    states that GNNs are at most as expressive as the WL algorithm for
    distinguishing non-isomorphic graphs.
    We can modify the lemma for the edge labelled WL algorithm
    and GNNs which account for edge features.
    Then the result follows after performing the transformation of planning
    tasks into the ILG representation.
    
    [$\supseteq$]
    The converse statement follows from~\cite[Corollary 6]{xu:etal:2019} and
    modifying Eq.~(4.1) of their GIN architecture by introducing an MLP for
    each of the finite number of edge labels in the ILG graph and summing their
    outputs at each GIN layer.
    The MLPs have disjoint range in order for injectivity to be preserved
    as to achieve the same distinguishing power of the edge labelled WL
    algorithm.
    This can be easily enforced by increasing the hidden dimension size and
    having each MLP to map to orthogonal dimensions.
\end{proof}

We proceed to show that GNNs acting on ILGs is similar to Muninn's GNN architecture~\cite{stahlberg:etal:2022}.
The idea of the proof is that encoding different predicates into the ILG
representation is equivalent to having different weights for message passing to
and from different predicates in Muninn.
However, we also show that our model has strictly higher expressivity for
distinguish planning tasks due to explicitly encoding achieved goals.

\begin{theorem}
    [$\GNN$ is strictly more expressive than $\MUNINN$ at distinguishing
    planning tasks.]  
    \label{thm:gnn_muninn}
    Let $\Pi_1$ and $\Pi_2$ be any two planning tasks from a given domain.
    For all $\params$, if
    $\MUNINN_{\params}(\Pi_1)\not=\MUNINN_{\params}(\Pi_2)$, then there exists
    a corresponding set of parameters $\paramsTwo$ such that
    $\GNN_{\paramsTwo}(\Pi_1)\not=\GNN_{\paramsTwo}(\Pi_2)$.
    Furthermore, there exists a pair of planning tasks $\Pi_1$ and $\Pi_2$ such
    that there exists $\paramsTwo$ with
    $\GNN_{\paramsTwo}(\Pi_1)\not=\GNN_{\paramsTwo}(\Pi_2)$ but for all
    $\params$, $\MUNINN_{\params}(\Pi_1)=\MUNINN_{\params}(\Pi_2)$.
\end{theorem}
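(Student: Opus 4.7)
The forward direction follows the same pattern as Theorem~\ref{thm:wl_gnn}: given any Muninn parameters $\params$, I would construct corresponding ILG-GNN parameters $\paramsTwo$ whose per-object features match Muninn's. The crucial observation is that the ILG node colour already records every distinction Muninn relies on: the pair (status, predicate) separates $(\truesymbol, P)$, $(\goalsymbol, P)$ and $(\truegoalsymbol, P)$, and the edge labels encode argument positions in exactly the same way Muninn's per-argument weights do. I would use two GNN layers per Muninn layer: the first gathers the object embeddings at each proposition node and applies a colour-conditioned MLP that, in the $(\truegoalsymbol, P)$ case, reproduces the sum of Muninn's state-$P$ and goal-$P$ transformations (a $\truegoalsymbol$-node plays both roles inside Muninn); the second distributes the result back to objects with position-dependent weights, enforcing injectivity across edge labels via the orthogonal-subspace trick from the proof of Theorem~\ref{thm:wl_gnn}. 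This yields $\GNN_{\paramsTwo}(\Pi) = \MUNINN_{\params}(\Pi)$ for every $\Pi$, so every Muninn separation is realised by $\GNN$.

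For the strict inclusion I would exhibit a pair of tasks over four objects $a,b,c,d$ and a single binary predicate $P$. Let $\Pi_1$ have $s_0 = G = \set{P(a,b), P(c,d)}$, so both atoms are achieved goals, and let $\Pi_2$ have $s_0 = \set{P(a,c), P(b,d)}$ and $G = \set{P(a,d), P(b,c)}$, so no atom is achieved. The ILG of $\Pi_1$ has two proposition nodes, both coloured $(\truegoalsymbol, P)$, while the ILG of $\Pi_2$ has four proposition nodes, two coloured $(\truesymbol, P)$ and two coloured $(\goalsymbol, P)$. Because the multisets of node colours already differ, one WL iteration on the ILG, and hence $\GNN$ by Theorem~\ref{thm:wl_gnn}, distinguishes $\Pi_1$ from $\Pi_2$.

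The technical core is showing Muninn cannot tell the two tasks apart for any $\params$. I would induct on the number of message-passing layers, partitioning the four objects into two role classes: objects sitting at argument position $1$ of both their state atom and their goal atom, and objects sitting at position $2$ of both. Each of $\Pi_1$ and $\Pi_2$ contains exactly two objects of each class, and in both tasks every atom is incident to one object of each class. Starting from identical initial object embeddings, the inductive step shows that after layer $\ell$ all class-$1$ objects share one embedding $h_\ell^{(1)}$ and all class-$2$ objects share another embedding $h_\ell^{(2)}$, with these values equal across $\Pi_1$ and $\Pi_2$, because every atom-level message in Muninn depends only on the predicate, the status (state or goal), the argument position, and the embeddings of the participating objects. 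Hence $\MUNINN_{\params}(\Pi_1) = \MUNINN_{\params}(\Pi_2)$ for every choice of $\params$.

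The main obstacle I anticipate is pinning down Muninn's exact update rule in enough detail to make the symmetry induction airtight; in particular, handling any aggregator (sum, max, etc.) over incident atoms and showing that the separate processing of state and goal atoms really does collapse onto the two tasks above. Once the bijection between role classes across $\Pi_1$ and $\Pi_2$ is formalised, each inductive step reduces to a routine case analysis, and both halves of the theorem follow.
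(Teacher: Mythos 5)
Your proof is correct and follows essentially the same route as the paper's: the inclusion is established by simulating Muninn's per-type message-passing functions with a GNN on the ILG (the paper likewise absorbs the distinct MLPs into one larger MLP acting on partitioned latent features, with edge labels supplying the argument positions), and the strictness rests on the same key insight that the ILG's $\truegoalsymbol$ colour records achieved goals while Muninn's implicit graph keeps state and goal atoms as separate nodes that colour refinement cannot tell apart. Your witness pair (four objects, giving two disjoint 4-cycles versus one 8-cycle in the Muninn graph) differs from the paper's smaller two-object example in which one task's goal is already fully achieved, but both are defeated by the same symmetry argument, so the substance is identical.
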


\begin{proof}[Proof]
    \renewcommand{\xxshift}{0.7cm}
\renewcommand{\yyshift}{0.9cm}
\renewcommand{\csize}{0.8cm}
\renewcommand{\descccsize}{\scriptsize}
\newcommand{\objectsize}{\descccsize}
\renewcommand{\ilgword}[1]{\text{\descccsize{#1}}}
\newcommand{\objword}[1]{\text{\objectsize{#1}}}
\renewcommand{\desccccc}{5cm}
\renewcommand{\desccc}{6cm}
\renewcommand{\anchor}{west}
\newcommand{\minimumheight}{1.5em}

\begin{figure}

\newcommand{\subcapa}{\ilg{} of $\Pi_1$}
\newcommand{\subcapb}{\ilg{} of $\Pi_2$}
\newcommand{\subcapc}{Implicit Muninn graph of $\Pi_1$}
\newcommand{\subcapd}{Implicit Muninn graph of $\Pi_2$}
\newcommand{\wholecap}{\ilg{} and Muninn graph representations of tasks in Thm.~\ref{thm:gnn_muninn} [$\supsetneq$].}
\centering
\begin{subfigure}{0.49\columnwidth}
    \centering
    \begin{tikzpicture}[  
        thick,  
        every node/.style={  
            draw,  
            rounded corners,  
            minimum height=\minimumheight,  
        }  
    ]
    \node[fill=caribbeangreen] (onaa) at (-2.25*\xxshift,\yyshift) {
    $\ilgword{Q(a,a)}$
    };
    \node[fill=caribbeangreen] (onbb) at (-0.75*\xxshift,\yyshift) {
    $\ilgword{Q(b,b)}$
    };
    \node[fill=yellow] (onab) at ( 0.75*\xxshift,\yyshift) {
    $\ilgword{Q(a,b)}$
    };
    \node[fill=yellow] (onba) at ( 2.25*\xxshift,\yyshift) {
    $\ilgword{Q(b,a)}$
    };
    \node[fill=babyblue] (a) at (  -1.5*\xxshift,2*\yyshift) {
    $\objword{a}$
    };
    \node[fill=babyblue] (b) at (   1.5 *\xxshift,2*\yyshift) {
    $\objword{b}$
    };
    
    \path [-,draw=plta] (a.south) edge[bend left=15] (onaa.north);
    \path [-,draw=pltb] (a.south) edge[bend right=15] (onaa.north);
    \path [-,draw=plta] (b.south) edge[bend left=15] (onbb.north);
    \path [-,draw=pltb] (b.south) edge[bend right=15] (onbb.north);
    
    \path [-,draw=plta] (a.south) edge (onab.north);
    \path [-,draw=pltb] (b.south) edge (onab.north);
    
    \path [-,draw=plta] (b.south) edge (onba.north);
    \path [-,draw=pltb] (a.south) edge (onba.north);

    \end{tikzpicture}
    \caption{\subcapa}
\end{subfigure}
\begin{subfigure}{0.49\columnwidth}
    \centering
    \begin{tikzpicture}[  
        thick,  
        every node/.style={  
            draw,  
            rounded corners,  
            minimum height=\minimumheight,  
        }  
    ]
    \node[fill=aquamarine] (onab) at (-1.5*\xxshift,\yyshift) {
    $\ilgword{Q(a,b)}$
    };
    \node[fill=aquamarine] (onba) at ( 1.5*\xxshift,\yyshift) {
    $\ilgword{Q(b,a)}$
    };
    \node[fill=babyblue] (a) at (  -1.5*\xxshift,2*\yyshift) {
    $\objword{a}$
    };
    \node[fill=babyblue] (b) at (   1.5 *\xxshift,2*\yyshift) {
    $\objword{b}$
    };
    
    \path [-,draw=plta] (a.south) edge (onab.north);
    \path [-,draw=pltb] (b.south) edge (onab.north);
    
    \path [-,draw=plta] (b.south) edge (onba.north);
    \path [-,draw=pltb] (a.south) edge (onba.north);

    \end{tikzpicture}
    \caption{\subcapb}
\end{subfigure}
\begin{subfigure}{0.49\columnwidth}
    \centering
    \begin{tikzpicture}[  
        thick,  
        every node/.style={  
            draw,  
            rounded corners,  
            minimum height=\minimumheight,  
        }  
    ]
    \node[fill=gray!30] (onaa) at (-2.25*\xxshift,\yyshift) {
    $\ilgword{Q(a,a)}$
    };
    \node[fill=gray!30] (onbb) at (-0.75*\xxshift,\yyshift) {
    $\ilgword{Q(b,b)}$
    };
    \node[fill=gray!30] (onab) at ( 0.75*\xxshift,\yyshift) {
    $\ilgword{Q}_{\ilgword{g}}\ilgword{(a,b)}$
    };
    \node[fill=gray!30] (onba) at ( 2.25*\xxshift,\yyshift) {
    $\ilgword{Q}_{\ilgword{g}}\ilgword{(b,a)}$
    };
    \node[fill=gray!30] (a) at (  -1.5*\xxshift,2*\yyshift) {
    $\objword{a}$
    };
    \node[fill=gray!30] (b) at (   1.5 *\xxshift,2*\yyshift) {
    $\objword{b}$
    };
    
    \path [-,draw=plta] (a.south) edge[bend left=15] (onaa.north);
    \path [-,draw=pltb] (a.south) edge[bend right=15] (onaa.north);
    \path [-,draw=plta] (b.south) edge[bend left=15] (onbb.north);
    \path [-,draw=pltb] (b.south) edge[bend right=15] (onbb.north);
    
    \path [-,draw=pltc] (a.south) edge (onab.north);
    \path [-,draw=pltd] (b.south) edge (onab.north);
    
    \path [-,draw=pltc] (b.south) edge (onba.north);
    \path [-,draw=pltd] (a.south) edge (onba.north);

    \end{tikzpicture}
    \caption{\subcapc}
\end{subfigure}
\begin{subfigure}{0.49\columnwidth}
    \centering
    \begin{tikzpicture}[  
        thick,  
        every node/.style={  
            draw,  
            rounded corners,  
            minimum height=\minimumheight,  
        }  
    ]
    \node[fill=gray!30] (onab) at (-2.25*\xxshift,\yyshift) {
    $\ilgword{Q(a,b)}$
    };
    \node[fill=gray!30] (onba) at (-0.75*\xxshift,\yyshift) {
    $\ilgword{Q(b,a)}$
    };
    \node[fill=gray!30] (onabg) at ( 0.75*\xxshift,\yyshift) {
    $\ilgword{Q}_{\ilgword{g}}\ilgword{(a,b)}$
    };
    \node[fill=gray!30] (onbag) at ( 2.25*\xxshift,\yyshift) {
    $\ilgword{Q}_{\ilgword{g}}\ilgword{(b,a)}$
    };
    \node[fill=gray!30] (a) at (  -1.5*\xxshift,2*\yyshift) {
    $\objword{a}$
    };
    \node[fill=gray!30] (b) at (   1.5 *\xxshift,2*\yyshift) {
    $\objword{b}$
    };
    
    \path [-,draw=plta] (a.south) edge (onab.north);
    \path [-,draw=pltb] (b.south) edge (onab.north);
    \path [-,draw=plta] (b.south) edge (onba.north);
    \path [-,draw=pltb] (a.south) edge (onba.north);
    
    \path [-,draw=pltc] (a.south) edge (onabg.north);
    \path [-,draw=pltd] (b.south) edge (onabg.north);
    
    \path [-,draw=pltc] (b.south) edge (onbag.north);
    \path [-,draw=pltd] (a.south) edge (onbag.north);

    \end{tikzpicture}
    \caption{\subcapd}
\end{subfigure}
\caption{\wholecap}  
\label{fig:gnn_muninn}
\end{figure}

    [$\supseteq$]  
    In order to show the inclusion, we show that a $\MUNINN$ instance operating
    on a state can be expressed as a GNN operating on the \ilg{} representation
    of the state.
    %
    More explicitly, we show that the implicit graph representation of planning
    states by $\MUNINN$ is the same graph as \ilg{}.
    The message passing steps and initial node features are different but the
    semantic meaning of executing both algorithms are the same.
    The node features in the implicit graphs of Muninn are all the same when
    ignoring random node initialisation.
    Muninn differentiates object nodes and fact nodes by using different
    message passing functions depending on whether a node is an object or a
    fact, and depending on which predicate the fact belongs to.
    %
    In the language of \ilg{}, $\MUNINN$'s message passing step on fact
    nodes $p=P(o_1,\ldots,o_{n_P})$ is 
    \begin{align}
        h_p^{L+1} = \mlp_P(h_{o_1}^{L}, \ldots, h_{o_{n_P}}^{L}) \label{eq:muninn_fact}
    \end{align}
    where $h_p^{L+1}$ denotes the latent embedding of the node $p$ in the
    $L+1$-th layer, $h_{o_i}^{L}$ denotes the latent embedding of the
    object node $o_i$ in the $l$-th layer, and $\mlp_P$ is a multilayer
    perceptron, with a different one for each predicate.
    The message passing step of $\MUNINN$ on object nodes $o_i$ is
    \begin{align}
        h_o^{L+1} = \mlp_U(h_o^{L}, \mseta{h_{p}^{L} \mid o \in p}) \label{eq:muninn_object}
    \end{align}
    where $o \in p$ denotes that $o$ is an argument of the predicate associated
    with $p$.
    We note that having a different $\mlp$ in the message passing step for
    different nodes is equivalent to having a larger but
    identical $\mlp$ in the
    message passing step for all nodes.
    This is because the model can learn to partition latent node features
    depending on their semantic meaning and thus be able to use a single $\mlp$
    function to act as multiple different functions for different node feature
    partitions.
    Thus, Eq.~\eqref{eq:muninn_fact} and~\eqref{eq:muninn_object} can be
    imitated by a GNN operating on \ilg{} since \ilg{} features differentiate
    nodes depending on whether
    they correspond to an object, or a fact
    associated with a predicate.
    Different edge labels in the \ilg{} allow it to distinguish
    the relationship between facts and objects depending on
    their position in the
    predicate argument.

    
    [$\supsetneq$] 
    The main idea here is that Muninn does not keep track of achieved goals and
    sometimes cannot even see that the goal has been achieved.
    Firstly, let \mug{} denote the underlying edge-labelled graph representation of planning tasks in Muninn, such that $\GNNa^{\mug} = \MUNINN$.
    To see how $\GNN$ is strictly more expressive than Muninn,
    we consider the following pair of planning tasks.
    Let $\Pi_1 = \probliftedgeneral{s_0^{(1)}}{G}$ and $\Pi_2 =
    \probliftedgeneral{s_0^{(2)}}{G}$ with $\predicates = \set{Q}$,
    $\objects=\set{a, b}$, $\schemata = \emptyset$, $G = s_0^{(2)} = \set{Q(a,
    b), Q(b, a)}$ and $s_0^{(1)} = \set{Q(a,a), Q(b,b)}$.
    Fig.~\ref{fig:gnn_muninn} illustrates the \ilg{} and \mug{} representation of $\Pi_1$ and $\Pi_2$.
    It is clear that the \ilg{} representation of $\Pi_1$ and $\Pi_2$ are
    different and hence $\GNN$ differentiates between $\Pi_1$ and $\Pi_2$.
    On the other hand without RNI, any edge-labelled variant of the WL algorithm views the \mug{} representation of $\Pi_1$ and $\Pi_2$ illustrated in Fig.~\ref{fig:gnn_muninn}(c) and (d) as the same.
    Thus, $\GNNa^{\mug} = \MUNINN$ views the graphs as the same.
\end{proof}

\begin{corollary}
    [$\WL$ is strictly more expressive than $\MUNINN$ at distinguishing planning
    tasks.]  
    \label{thm:wl_muninn}
    Let $\Pi_1$ and $\Pi_2$ be any two planning tasks from a given domain.
    For all $\params$, if $\MUNINN_{\params}(\Pi_1)\not=\MUNINN_{\params}(\Pi_2)$,
    then there exists a corresponding set of parameters $\paramsTwo$ such that
    $\WL_{\paramsTwo}(\Pi_1)\not=\WL_{\paramsTwo}(\Pi_2)$.
    Furthermore, there exists a pair of planning tasks $\Pi_1$ and $\Pi_2$ such
    that there exists $\paramsTwo$ with
    $\WL_{\paramsTwo}(\Pi_1)\not=\WL_{\paramsTwo}(\Pi_2)$ but for all $\params$,
    $\MUNINN_{\params}(\Pi_1)=\MUNINN_{\params}(\Pi_2)$.
\end{corollary}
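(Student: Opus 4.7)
The corollary is essentially a transitivity argument: Theorem~\ref{thm:wl_gnn} establishes that $\WL$ and $\GNN$ have equivalent distinguishing power, while Theorem~\ref{thm:gnn_muninn} establishes that $\GNN$ is strictly more expressive than $\MUNINN$. Chaining these two results should immediately yield the corollary, so the plan is to write the proof as two short, essentially bookkeeping steps that invoke the prior theorems rather than constructing anything new.

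For the inclusion direction, I would assume $\MUNINN_{\params}(\Pi_1) \not= \MUNINN_{\params}(\Pi_2)$ for some $\params$. Applying Theorem~\ref{thm:gnn_muninn} produces parameters $\params'$ with $\GNN_{\params'}(\Pi_1) \not= \GNN_{\params'}(\Pi_2)$. Then applying the first (forward) implication of Theorem~\ref{thm:wl_gnn} to $\params'$ yields the desired $\paramsTwo$ with $\WL_{\paramsTwo}(\Pi_1) \not= \WL_{\paramsTwo}(\Pi_2)$. This is just a two-step chain.

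For the strict separation, I would reuse the same witness pair $\Pi_1, \Pi_2$ exhibited in the $[\supsetneq]$ part of Theorem~\ref{thm:gnn_muninn} (the example with $\predicates=\{Q\}$, $\objects=\{a,b\}$, $s_0^{(1)}=\{Q(a,a),Q(b,b)\}$, $s_0^{(2)}=G=\{Q(a,b),Q(b,a)\}$). That theorem already guarantees existence of $\params'$ with $\GNN_{\params'}(\Pi_1) \not= \GNN_{\params'}(\Pi_2)$ together with $\MUNINN_{\params}(\Pi_1) = \MUNINN_{\params}(\Pi_2)$ for all $\params$. Invoking the forward direction of Theorem~\ref{thm:wl_gnn} on $\params'$ produces $\paramsTwo$ with $\WL_{\paramsTwo}(\Pi_1) \not= \WL_{\paramsTwo}(\Pi_2)$, which completes the strictness witness.

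There is no real obstacle here; the only thing to be careful about is the quantifier structure, since Theorem~\ref{thm:gnn_muninn} is stated in the ``for all $\params$, exists $\paramsTwo$'' form, and I need to make sure the chosen $\params'$ in the strictness step comes from the existential clause of Theorem~\ref{thm:gnn_muninn} rather than being arbitrary. Provided the quantifier threading is written cleanly, the proof should be no more than a few lines long and require no new constructions.
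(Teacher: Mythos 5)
Your proposal is correct and matches the paper's intent exactly: the paper states this as a corollary with no separate proof precisely because it follows by chaining the forward direction of Thm.~\ref{thm:wl_gnn} with both parts of Thm.~\ref{thm:gnn_muninn}, reusing the same counterexample pair for strictness. Your attention to the quantifier threading is appropriate but raises no actual difficulty.
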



Our next theorem shows that $\WL$ and $\DL$ features are incomparable, in the
sense that there are pairs of planning tasks that look equivalent to one model
but not the other.
We use a similar counterexample to that used for Muninn but with an extra
predicate which $\WL$ does not distinguish but $\DL$ can.
Conversely we use the fact that \DescriptionLogic{}s are limited by the need to
convert planning predicates into binary predicates to construct a
counterexample pair of planning tasks with ternary predicates which $\DL$ views
as the same while $\WL$ does not.

\begin{theorem}
    [$\WL$ and $\DL$ are incomparable at distinguishing planning tasks.]  
    \label{thm:wl_dl}
    There exists a pair of planning tasks $\Pi_1$ and $\Pi_2$ such that there
    exists $\paramsTwo$ with
    $\WL_{\paramsTwo}(\Pi_1)\not=\WL_{\paramsTwo}(\Pi_2)$ but for all
    $\params$, $\DL_{\params}(\Pi_1)=\DL_{\params}(\Pi_2)$.
    Furthermore, there exists a pair of planning tasks $\Pi_1$ and $\Pi_2$ such
    that there exists $\paramsTwo$ with
    $\DL_{\paramsTwo}(\Pi_1)\not=\DL_{\paramsTwo}(\Pi_2)$ but for all
    $\params$, $\WL_{\params}(\Pi_1)=\WL_{\params}(\Pi_2)$.
\end{theorem}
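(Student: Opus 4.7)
My plan is to prove both directions by exhibiting explicit counterexample pairs, since the two directions rely on orthogonal limitations of the feature generators.

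For the direction in which $\WL$ distinguishes but $\DL$ does not, I would exploit the fact that Martin and Geffner's $\DL$ features are built from atomic concepts and binary roles, so any predicate of arity three or more can only be accessed through its pairwise projections. I would pick a domain with a single ternary predicate $R(x,y,z)$ and construct $\Pi_1$ and $\Pi_2$ whose ground atoms form different ternary relations but whose pairwise shadows $\{(x,y) \mid \exists z.\ R(x,y,z)\}$, $\{(x,z) \mid \exists y.\ R(x,y,z)\}$ and $\{(y,z) \mid \exists x.\ R(x,y,z)\}$ all coincide, together with matching sets of unary slices. Such pairs are standard among 3-uniform hypergraphs with identical 2-shadows. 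Every $\DL$ feature factors through these pairwise projections, yielding $\DL_{\params}(\Pi_1) = \DL_{\params}(\Pi_2)$ for every $\params$. On the other hand, the ILG encodes each ternary ground atom as a single fact node with three labelled edges, and even a single edge-labelled WL iteration assigns distinct multisets of (colour, position) pairs to the affected object nodes, so some parameter setting $\paramsTwo$ separates them.

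For the converse direction, where $\DL$ distinguishes but $\WL$ does not, I would start from a Muninn-style small instance and augment it with an auxiliary predicate that hides the distinguishing signal from $\WL$ while leaving it visible to a composed $\DL$ concept. The rationale is that $\DL$ with role composition, inverse roles, or universal restrictions captures configurations beyond the two-variable counting logic to which 1-WL is equivalent. Concretely I would use two tasks encoding regular-but-non-isomorphic binary structures, in the spirit of the 6-cycle versus two disjoint 3-cycles of Fig.~\ref{fig:example}, padded with unary predicates and goal flags so that every object node and every fact node receives the same initial ILG colour and so that the WL refinement converges to a colouring in which all objects share one colour and all facts share another. Then $\WL_{\params}(\Pi_1) = \WL_{\params}(\Pi_2)$ for every $\params$, while a composed $\DL$ concept along the binary predicate separates them.

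The main obstacle lies in the second direction. The first direction is essentially a packaging of a known combinatorial construction. In the second, one must verify that the colour palette $\{\truesymbol, \goalsymbol, \truegoalsymbol, \objectsymbol\}$ tagged with predicates does not accidentally expose a local asymmetry which $\WL$ could latch onto. The bulk of the work is in choosing the initial state, the goal condition, and the auxiliary predicate so that every object has the same multiset of incident labelled fact neighbours and every fact node the same multiset of object neighbours, while still leaving a higher-order structural signal that only $\DL$'s composed concepts can observe.
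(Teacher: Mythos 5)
Your high-level strategy (two explicit counterexample pairs, one exploiting $\DL$'s restriction to binary roles, one exploiting $\WL$'s locality) matches the paper's, but both of your concrete constructions have gaps, and the one specific claim you commit to is false in general.

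For the direction where $\WL$ distinguishes and $\DL$ does not, you assert that for a pair of 3-uniform hypergraphs with identical 2-shadows ``even a single edge-labelled WL iteration assigns distinct multisets of (colour, position) pairs to the affected object nodes.'' This does not follow, and for the natural candidate pairs it is wrong: if the pairwise projections and the positional degree sequences agree, every object node sees the same multiset of (fact-colour, position) pairs and every fact node sees the same multiset of (object-colour, position) pairs, so the refinement stabilises immediately with identical colour histograms. The paper's own instance pair ($s_0^1 = \{P(a,b,a), P(c,b,c), P(a,d,c), P(c,d,a)\}$ versus $s_0^2 = \{P(a,b,c), P(c,b,a), P(a,d,a), P(c,d,c)\}$) has exactly this property, and $\WL$ distinguishes the two tasks only because the goal $G=\{P(a,b,c)\}$ is achieved in $s_0^2$ but not $s_0^1$, so one $\ilg{}$ contains a node of colour $(\truegoalsymbol,P)$ and the other does not --- the separation happens at iteration $0$ via the node colouring, not via message passing. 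You need either this goal-overlap device or a hypergraph pair you have actually verified to be $1$-WL-separable on its labelled incidence graph; as written the step ``so some parameter setting separates them'' is unsupported.

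For the converse direction your route is genuinely different from the paper's and considerably heavier. Distinguishing a $6$-cycle from two $3$-cycles requires $\DL$ constructs that reach beyond two-variable counting logic (e.g.\ transitive closure or iterated composition plus a way to detect reflexivity or count the closure's extension), so the argument now depends on exactly which operators the \citet{martin:geffner:2000} grammar admits --- a dependency you neither discharge nor even name a specific concept for. The paper instead takes two objects, a state predicate $Q$ and a goal predicate $W$, sets $s_0^1=\{Q(a,a),Q(b,b)\}$ and $s_0^2=\{Q(a,b),Q(b,a)\}$ with $G=\{W(a,b),W(b,a)\}$, and separates them with a single role-value map $\abs{Q=W}$; the $\WL$-indistinguishability of the two $\ilg{}$s is then a short direct check with no padding needed. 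Your cycle construction can likely be made to work, but you would have to exhibit the distinguishing $\DL$ feature explicitly and confirm it lies in the grammar; until then this direction is also incomplete.
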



\begin{proof}[Proof]
[$\exists$$<$]
\begin{figure}
\renewcommand{\xxshift}{0.7cm}
\renewcommand{\yyshift}{0.9cm}
\renewcommand{\csize}{0.8cm}
\renewcommand{\descccsize}{\scriptsize}
\newcommand{\objectsize}{\descccsize}
\renewcommand{\ilgword}[1]{\text{\descccsize{#1}}}
\newcommand{\objword}[1]{\text{\objectsize{#1}}}
\renewcommand{\desccccc}{5cm}
\renewcommand{\desccc}{6cm}
\renewcommand{\anchor}{west}
\newcommand{\minimumheight}{1.5em}
\newcommand{\subcapa}{\ilg{} of $\Pi_1$}
\newcommand{\subcapb}{\ilg{} of $\Pi_2$}
\newcommand{\wholecap}{\ilg{} representations of tasks in Thm.~\ref{thm:wl_dl} [$\exists$$<$].}
\centering
\begin{subfigure}{0.49\columnwidth}
    \centering
    \begin{tikzpicture}[  
        thick,  
        every node/.style={  
            draw,  
            rounded corners,  
            minimum height=\minimumheight,  
        }  
    ]
    \node[fill=caribbeangreen] (onaa) at (-2.25*\xxshift,\yyshift) {
    $\ilgword{Q(a,a)}$
    };
    \node[fill=caribbeangreen] (onbb) at (-0.75*\xxshift,\yyshift) {
    $\ilgword{Q(b,b)}$
    };
    \node[fill=yellow] (onab) at ( 0.75*\xxshift,\yyshift) {
    $\ilgword{W(a,b)}$
    };
    \node[fill=yellow] (onba) at ( 2.25*\xxshift,\yyshift) {
    $\ilgword{W(b,a)}$
    };
    \node[fill=babyblue] (a) at (  -1.5*\xxshift,2*\yyshift) {
    $\objword{a}$
    };
    \node[fill=babyblue] (b) at (   1.5 *\xxshift,2*\yyshift) {
    $\objword{b}$
    };
    
    \path [-,draw=plta] (a.south) edge[bend left=15] (onaa.north);
    \path [-,draw=pltb] (a.south) edge[bend right=15] (onaa.north);
    \path [-,draw=plta] (b.south) edge[bend left=15] (onbb.north);
    \path [-,draw=pltb] (b.south) edge[bend right=15] (onbb.north);
    
    \path [-,draw=plta] (a.south) edge (onab.north);
    \path [-,draw=pltb] (b.south) edge (onab.north);
    
    \path [-,draw=plta] (b.south) edge (onba.north);
    \path [-,draw=pltb] (a.south) edge (onba.north);

    \end{tikzpicture}
    \caption{\subcapa}
\end{subfigure}
\begin{subfigure}{0.49\columnwidth}
    \centering
    \begin{tikzpicture}[  
        thick,  
        every node/.style={  
            draw,  
            rounded corners,  
            minimum height=\minimumheight,  
        }  
    ]
    \node[fill=caribbeangreen] (onaa) at (-2.25*\xxshift,\yyshift) {
    $\ilgword{Q(a,b)}$
    };
    \node[fill=caribbeangreen] (onbb) at (-0.75*\xxshift,\yyshift) {
    $\ilgword{Q(b,a)}$
    };
    \node[fill=yellow] (onab) at ( 0.75*\xxshift,\yyshift) {
    $\ilgword{W(a,b)}$
    };
    \node[fill=yellow] (onba) at ( 2.25*\xxshift,\yyshift) {
    $\ilgword{W(b,a)}$
    };
    \node[fill=babyblue] (a) at (  -1.5*\xxshift,2*\yyshift) {
    $\objword{a}$
    };
    \node[fill=babyblue] (b) at (   1.5 *\xxshift,2*\yyshift) {
    $\objword{b}$
    };
    
    \path [-,draw=plta] (a.south) edge (onaa.north);
    \path [-,draw=pltb] (b.south) edge (onaa.north);
    \path [-,draw=plta] (b.south) edge (onbb.north);
    \path [-,draw=pltb] (a.south) edge (onbb.north);
    
    \path [-,draw=plta] (a.south) edge (onab.north);
    \path [-,draw=pltb] (b.south) edge (onab.north);
    
    \path [-,draw=plta] (b.south) edge (onba.north);
    \path [-,draw=pltb] (a.south) edge (onba.north);

    \end{tikzpicture}
    \caption{\subcapa}
\end{subfigure}
\caption{\wholecap}  
\label{fig:ilg_counterexample}
\end{figure}

    We begin by describing a pair of planning tasks $\Pi_1$ and $\Pi_2$ such
    that $\WL_{\params}(\Pi_1)=\WL_{\params}(\Pi_2)$ for any set of parameters
    $\params$ but are distinguished by \DescriptionLogic{}.    
    Let $\Pi_1 = \probliftedgeneral{s_0^1}{G}$ and $\Pi_2 =
    \probliftedgeneral{s_0^2}{G}$ with 
    $\predicates = \set{Q, W}$,
    $\objects=\set{a, b}$, 
    $\schemata$ contains the single action schema 
    $o = \langle\{x,y\}$, $\{Q(x, y)\}$, $\{W(x, y)\}$, $\emptyset\rangle$, 
    $G = \{W(a, b)$, $W(b, a) \}$,
    $s_0^1 = \{Q(a,a)$, $Q(b,b)\}$ and 
    $s_0^2 = \{Q(a, b)$, $Q(b, a)\}.$
    
    We have that $h^*(\Pi_1) = \infty$ as the problem $\Pi_1$ is unsolvable,
    while $h^*(\Pi_2) = 2$ as an optimal plan contains actions $o(a, b)$ and
    $o(b, a)$.
    DL features are able to distinguish the two planning tasks by considering
    the role-value map $(Q = W)(s)$ defined by $\set{x \mid \forall y : Q(x, y)
    \in s \iff W(x, y) \in s}$,
    and the corresponding numerical feature $\abs{Q=W}(s) = \abs{(Q=W)(s)}$.
    We have that $\abs{Q=W}(s_0^1) = 0$ and $\abs{Q=W}(s_0^1) = 2$,
    meaning that $\DL$ can distinguish between $\Pi_1$ and $\Pi_2$.
    
    On the other hand, the \ilg{} representations of $\Pi_1$ and $\Pi_2$ are
    indistinguishable to our definition of the edge-labelled WL algorithm.
    Fig.~\ref{fig:ilg_counterexample} illustrates this example and we note that
    it is similar to the implicit Muninn graph representations of the pair of
    planning tasks from Thm.~\ref{thm:gnn_muninn}.
    
    [$\exists$$>$]
    We identify a pair of problems with ternary predicates which compile to the
    same problem with only binary predicates for which DL features are defined.
    For problems with at most binary predicates, DL introduces base roles on
    each predicate $P(x, y) \in \predicates$ by $P^s = \set{(a, b) \mid P(a, b)
    \in s}$ where $s$ is a planning state.
    Then given an $n$-ary predicate $R(x_1, \ldots, x_n)$, DL introduces
    $n(n-1)/2$ roles defined by $R_{i,j}^s = \{(a, b) \mid$
    $\exists o_1,\ldots,o_{i-1},o_{i+1},\ldots,o_{j-1},o_{j+1},\ldots,o_n \in
    \objects$,
    $R(o_1,\ldots,o_{i-1},a,o_{i+1},\ldots,o_{j-1},b,o_{j+1},\ldots,o_n) \in
    s\}$
    for $1 \leq i < j \leq n$.
    Now consider the problems $\Pi_1 = \probliftedgeneral{s_0^1}{G}$ and
    $\Pi_2 = \probliftedgeneral{s_0^2}{G}$ now with 
    $\predicates = \{P\}$,
    $\objects = \{a, b, c, d\}$,
    $\schemata = \emptyset$,
    $G = \{P(a,b,c)\}$,
    and 
    \begin{align*}
        s_0^1 &= \{P(a,b,a), P(c,b,c), P(a,d,c), P(c,d,a)\} \\
        s_0^2 &= \{P(a,b,c), P(c,b,a), P(a,d,a), P(c,d,c)\}.
    \end{align*}
    
    We have that $h^*(\Pi_1) = \infty$ since there are no actions and the
    initial state is not the goal condition, while $h^*(\Pi_2) = 0$ since $G
    \subseteq s_0^2$.
    The \ilg{} for the two tasks are distinguished by the WL algorithm as the
    \ilg{} of $\Pi_1$ has no achieved goal colour while $\Pi_2$ does.
    However, DL features view the two states $s_0^1$ and $s_0^2$ as the
    same due after the compilation from ternary to binary predicates:
    \begin{align*}
        \begin{array}{c c c c}
            P_{1,2}(a,b) & P_{1,2}(a,d) & P_{1,2}(c,b) & P_{1,2}(c,d) \\
            P_{1,3}(a,a)&
            P_{1,3}(a,c)&
            P_{1,3}(c,a)&
            P_{1,3}(c,c)\\
            P_{2,3}(b,a)&
            P_{2,3}(b,c)&
            P_{2,3}(d,a)&
            P_{2,3}(d,c).
        \end{array}
    \end{align*}
    
    Thus any DL features will be the same for both $s_0^1$ and $s_0^2$
    and thus cannot distinguish $\Pi_1$ and $\Pi_2$.
\end{proof}

Our final theorem combines previous results and states that there exist domains for which all feature generators defined thus far are not powerful enough to perfect learn $h^*$, with proof in the appendix.
Although this is not a surprising result, we hope to bring intuition on
what is further required for constructing more expressive planning features.

\begin{corollary}
    [All feature generation models thus far cannot generate features that allow us
    to learn $h^*$ for all domains.]  
    \label{thm:wl_cannot}
    Let $\mathcal{F} \in \left\{ \WL, \GNN, \MUNINN, \DL \right\}$.
    There exists a domain $\Domain$ with a pair of planning tasks $\Pi_1$, $\Pi_2$
    such that for all parameters $\params$ for $\mathcal{F}$, we have that
    $\mathcal{F}_{\params}(\Pi_1) = \mathcal{F}_{\params}(\Pi_2)$ and $h^*(\Pi_1)
    \not= h^*(\Pi_2)$.
\end{corollary}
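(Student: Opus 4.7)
The plan is to handle each feature generator separately and to exploit the hierarchy already established in the paper. By Thm.~\ref{thm:wl_gnn} and Cor.~\ref{thm:wl_muninn}, any pair of tasks that $\WL$ cannot distinguish is also indistinguishable by $\GNN$ and by $\MUNINN$, so a single $\WL$-level counterexample suffices to discharge three of the four cases. For the fourth case, $\DL$, I would reuse the counterexample already built in Thm.~\ref{thm:wl_dl} [$\exists$$>$]: the pair $\Pi_1, \Pi_2$ there satisfies $h^*(\Pi_1) = \infty$ and $h^*(\Pi_2) = 0$, yet $\DL$ features coincide on both states after the ternary-to-binary role compilation, and both tasks share the same predicates, action schemas, and objects, so they live in a common domain $\Domain$.

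For the $\WL$ case, I would build a domain $\Domain$ with a single binary predicate $E$ and a $0$-ary goal predicate $\texttt{done}$, using the classical 6-cycle versus two-disjoint-3-cycles example of Fig.~\ref{fig:example} as a template for the two initial states. Concretely, $\Pi_1$ has six objects with $E$-facts encoding a 6-cycle, $\Pi_2$ has six objects with $E$-facts encoding two disjoint 3-cycles, both have goal $G = \set{\texttt{done}}$, and the action schemas are chosen so that $\texttt{done}$ can be reached only when a distinguished pair of objects lies in the same connected component of the $E$-graph. Since action schemas do not appear in the ILG, each ILG consists of six object nodes coloured $\objectsymbol$, six $E$-fact nodes coloured $(\truesymbol, E)$ with edge labels $1$ and $2$ separating the two argument positions, and one unachieved-goal node coloured $(\goalsymbol, \texttt{done})$. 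Because the underlying cycle graphs are WL-indistinguishable, every iteration of the edge-labelled WL refinement produces identical colour multisets on the two ILGs, so $\WL_{\params}(\Pi_1) = \WL_{\params}(\Pi_2)$ for every $\params$, while $h^*(\Pi_1)$ and $h^*(\Pi_2)$ differ (one finite, the other $\infty$).

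The main obstacle is arranging for the two $h^*$ values to differ without breaking the ILG-level symmetry. Any marker predicate used to pin down a "start" or "target" object would introduce a coloured node into the ILG and could let $\WL$ distinguish the two tasks after all. I would sidestep this by hiding the distinguishing information entirely inside action-schema preconditions (which are invisible to the ILG) through fixed object names, or, alternatively, by tagging a pair of objects with a unary predicate at vertex-transitively corresponding positions in the 6-cycle and in the disjoint 3-cycles; in either case the WL refinement on the two ILGs remains identical while solvability does not. Combined with the $\DL$ case from Thm.~\ref{thm:wl_dl}, this yields the corollary.
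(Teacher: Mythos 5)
Your overall route is sound, and it differs from the paper's in two ways. The paper's appendix proof simply recycles three concrete counterexamples: the $Q/W$ pair from Thm.~\ref{thm:wl_dl} [$\exists$$<$] (with $h^*$ values $\infty$ and $2$) serves for both $\WL$ and $\GNN$, the pair from Thm.~\ref{thm:gnn_muninn} serves for $\MUNINN$, and the ternary-predicate pair from Thm.~\ref{thm:wl_dl} [$\exists$$>$] serves for $\DL$. Your use of the hierarchy (Thm.~\ref{thm:wl_gnn} and Cor.~\ref{thm:wl_muninn}, taken contrapositively) to collapse the $\WL$, $\GNN$ and $\MUNINN$ cases into a single $\WL$-level counterexample is valid and arguably a cleaner decomposition, and your $\DL$ case is identical to the paper's. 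Where you diverge unnecessarily is in building a fresh 6-cycle/two-triangles domain for the $\WL$ case: the first half of Thm.~\ref{thm:wl_dl} already hands you exactly what is needed, namely a pair that the edge-labelled WL algorithm cannot distinguish on \ilg{}s yet with $h^*(\Pi_1)=\infty \neq 2 = h^*(\Pi_2)$. Reusing it would have discharged the case in one line.

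Regarding your new construction: of the two fixes you offer for the symmetry-breaking obstacle, the first (referencing fixed objects only inside action schemas, with no marker facts in $s_0$ or $G$) is correct --- the \ilg{}s are then the unmarked incidence graphs of Fig.~\ref{fig:example} plus an isolated $(\goalsymbol,\texttt{done})$ node, which WL provably cannot separate, while reachability of the goal differs. The second fix is not safe as stated. Tagging a single object breaks WL-indistinguishability (in the 6-cycle every object eventually receives colour information from the mark, whereas the objects in the unmarked triangle never do), and tagging the two distinguished objects with two \emph{distinct} unary predicates also breaks it (the start- and target-coloured neighbourhoods interleave differently in the two graphs after two refinement rounds). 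Only tagging both objects with the \emph{same} unary predicate, placed antipodally in the 6-cycle and in distinct triangles, preserves the refinement, and this requires explicit verification. Since your first fix suffices, the argument goes through, but the second alternative should be dropped or corrected.
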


In this section, we concluded that our $\WL$ features are one of the most
expressive features thus far in the literature for representing planning tasks,
the other being $\DL$ features.
We have done so by drawing an expressivity hierarchy between our $\WL$ features
and previous work on GNN architectures~\cite{stahlberg:etal:2022}.
We further constructed explicit counterexamples illustrating the difference
between $\WL$ and $\DL$ features, highlighting their respective advantages and
limitations.

\section{Experiments}
In this section, we empirically evaluate WL-GOOSE\footnote{Source code available at ~\cite{chen:etal:2024zenodo}} for learning domain-specific heuristics using WL features against the state-of-the-art.
We consider the domains and training and test sets from the learning track of the 2023 International Planning Competition (IPC)~\cite{seipp:segovia:2023}.
The domains are Blocksworld, Childsnack, Ferry, Floortile, Miconic, Rovers, Satellite, Sokoban, Spanner, and Transport.
Actions in all domains have unit cost.
Each domain contains instances categorised into easy, medium and hard difficulties depending on the number of objects in the instance.
For each domain, the training set consists of at most 99 easy instances and the test set consists of exactly 30 instances from each of the three easy, medium and hard difficulties that are not in the training set.

The hyperparameters considered for WL-GOOSE are the number of iterations $L$ for generating features using the WL algorithm and the choice of a machine learning model used and its corresponding hyperparameters.
In all our experiments with WL-GOOSE, we use $L=4$ and, since our learning target is $h^*$, we consider the following regression models:
support vector regression with the dot product kernel (\svr{}) and the radial basis kernel (\svrInf{}), and Gaussian process regression~\cite{rasmussen:williams:2006} with the dot product kernel (\gpr).
We choose \svr{} over ridge regression for our kernelised linear model due to its sparsity and hence faster evaluation time with use of the $\epsilon$-insensitive loss function~\cite{vapnik:2000}.
The choice of Gaussian process allows us to explore a Bayesian treatment for learning $h^*$, providing us with confidence bounds on learned heuristics.

Furthermore, we experiment with the \textit{2-LWL algorithm}~\cite{morris:etal:2017} with $L=4$ for generating features alongside SVR with the dot product kernel (\lwlTwo).
The 2-LWL algorithm is a computationally feasible approximation of the 2-WL algorithm~\cite{morris:etal:2017}, which in turn is a generalisation of the WL algorithm where colours are assigned to pairs of vertices.
While the features computed by the 2-WL algorithm subsume those of the WL algorithm, it is requires quadratically more time than the WL algorithm.

For any configuration of WL-GOOSE, we use optimal plans returned by scorpion~\cite{seipp:etal:2020} on the training set with a 30-minute timeout on each instance for training.
States and the corresponding cost to the goal from each optimal plan are used as training data.
As baselines for heuristics, we use the domain-independent heuristic \hff{} and GNNs.
For the GNNs, we use GOOSE~\cite{chen:etal:2024} operating on the ILG representations of planning tasks with both max and mean aggregators and Muninn adapted to learn heuristics for use in GBFS only~\cite{stahlberg:2024}.
Every GOOSE model and SVR model is trained and evaluated 5 times with mean scores reported.
\gpr{}'s optimisation is deterministic and thus is only trained and evaluated once.
All GNNs use 4 message passing layers and a hidden dimension of 64.
All heuristics are evaluated using GBFS.
We include LAMA~\cite{richter:westphal:2010} using its first plan output as a strong satisficing planner baseline that uses multi-queue heuristic search and other optimisation techniques.
All methods use a timeout of 1800 seconds per evaluation problem.
Non-GNN models were run on a cluster with single Intel Xeon 3.2 GHz CPU cores and a memory limit of 8GB.
GOOSE used an NVIDIA RTX A6000 GPU, and Muninn an NVIDIA A10.
Other competition planners were not considered because they do not learn a heuristic.

Tab.~\ref{tab:summary} summarises our results with the coverage per domain for each planner and their total IPC score.
We discuss our results in detail and conclude this section by describing how to analyse the learned features of our models using an example.
More results can be found in the appendix.

\renewcommand{\arraystretch}{.9}
\begin{table}[t]
\setlength{\tabcolsep}{3pt}
\centering
\tablesize
\newcommand{\allfirst}[1]{{\underline{#1}}}

\begin{tabularx}{\columnwidth}{l Y Y Y Y Y Y Y Y Y} \toprule 
&
\multicolumn{2}{c}{classical} 
&
\multicolumn{3}{c}{GNN}
&
\multicolumn{4}{c}{WLF}
\\
\cmidrule(lr){2-3}  
\cmidrule(lr){4-6}  
\cmidrule(lr){7-10}  
Domain 
& \header{LAMA-F} 
& \header{\ff} 
& \header{\muninn} 
& \header{GOOSE$^{\ddagger}_{\text{max}}$} 
& \header{GOOSE$^{\ddagger}_{\text{mean}}$} 
& \header{\svrOne\seeded} 
& \header{\svr$^{\ddagger}_{\infty}$} 
& \header{\svr$^{\ddagger}_{\text{2-LWL}}$} 
& \header{\gprNew} 

\\ \midrule
blocksworld & 61 & \normalcell{28}{} & \normalcell{53}{} & \third{63.0}{} & \normalcell{60.6}{} & \second{72.2}{} & \normalcell{19.0}{} & \normalcell{22.2}{} & \first{\allfirst{75}}{}\\
childsnack & \allfirst{35} & \second{26}{} & \normalcell{12}{} & \normalcell{23.2}{} & \normalcell{15.6}{} & \third{25.0}{} & \normalcell{13.0}{} & \normalcell{9.8}{} & \first{29}{}\\
ferry & 68 & \normalcell{68}{} & \normalcell{38}{} & \third{70.0}{} & \third{70.0}{} & \first{\allfirst{76.0}}{} & \normalcell{32.0}{} & \normalcell{60.0}{} & \first{\allfirst{76}}{}\\
floortile & 11 & \first{\allfirst{12}}{} & \normalcell{1}{} & \normalcell{0.0}{} & \normalcell{1.0}{} & \second{2.0}{} & \normalcell{0.0}{} & \normalcell{0.0}{} & \second{2}{}\\
miconic & \allfirst{90} & \first{\allfirst{90}}{} & \first{\allfirst{90}}{} & \normalcell{88.6}{} & \normalcell{86.8}{} & \first{\allfirst{90.0}}{} & \normalcell{30.0}{} & \normalcell{67.0}{} & \first{\allfirst{90}}{}\\
rovers & \allfirst{67} & \third{34}{} & \normalcell{24}{} & \normalcell{25.6}{} & \normalcell{28.8}{} & \first{37.6}{} & \normalcell{28.0}{} & \normalcell{33.6}{} & \second{37}{}\\
satellite & \allfirst{89} & \first{65}{} & \normalcell{16}{} & \normalcell{31.0}{} & \normalcell{27.4}{} & \third{46.0}{} & \normalcell{29.4}{} & \normalcell{19.0}{} & \second{53}{}\\
sokoban & \allfirst{40} & \third{36}{} & \normalcell{31}{} & \normalcell{33.0}{} & \normalcell{33.4}{} & \first{38.0}{} & \normalcell{30.0}{} & \normalcell{30.6}{} & \first{38}{}\\
spanner & 30 & \normalcell{30}{} & \first{\allfirst{76}}{} & \normalcell{46.4}{} & \normalcell{36.6}{} & \second{73.2}{} & \normalcell{30.0}{} & \normalcell{51.8}{} & \third{73}{}\\
transport & \allfirst{66} & \first{41}{} & \normalcell{24}{} & \normalcell{32.4}{} & \second{38.0}{} & \normalcell{30.6}{} & \normalcell{27.0}{} & \third{34.2}{} & \normalcell{29}{}\\
\midrule
all & \allfirst{557} & \third{430}{} & \normalcell{365}{} & \normalcell{413.2}{} & \normalcell{398.2}{} & \second{490.6}{} & \normalcell{238.4}{} & \normalcell{328.2}{} & \first{502}{}\\
IPC score & \allfirst{492.7} & \third{393.5}{} & \normalcell{328.9}{} & \normalcell{391.0}{} & \normalcell{372.8}{} & \second{453.7}{} & \normalcell{210.7}{} & \normalcell{297.8}{} & \first{461.3}{}\\
\bottomrule

\end{tabularx}

\caption{Coverage of planners.
The bottom-most row provides their overall IPC 2023 learning track score.
Our new models are the WLF models.
Models marked $\ddagger$ are run 5 times with mean scores presented.
\lama{} is the only planner not performing single-queue GBFS.
The top three single-queue heuristic search planners in each row are indicated
by the cell colouring intensity, with the best one in bold.
The best planner overall in each row is underlined.
}\label{tab:summary}
\end{table}

\subsubsection*{How well do heuristics learned from WL features perform?}
Considering total coverage and total IPC score (Tab.~\ref{tab:summary}), we notice that \svr{} and \gpr{} outperform all the other planners with the exception of \lama{}, i.e., all learning-based approaches as well as \hff{}.
Domain-wise, both \svr{} and \gpr{} outperform Muninn and GOOSE on 9 domains.
Both \svr{} and \gpr{} outperform or tie with LAMA on 4 domains, namely Blocksworld, Ferry, Miconic and Spanner.
\gpr{} is able to return better plans than LAMA on 5 domains (Blocksworld, Childsnack, Ferry, Miconic, Sokoban), while the reverse is true only on 3 domains (Rovers, Satellite, Spanner).
For Spanner, this is because LAMA's heuristics are not informative for this domain, which leads to it performing like blind search and hence returning better plans on problems it can solve.

\newcommand{\smlversuswidth}{0.39\columnwidth}
\begin{figure}[t]
\centering
\raisebox{-0.5\height}{\includegraphics[width=\smlversuswidth]{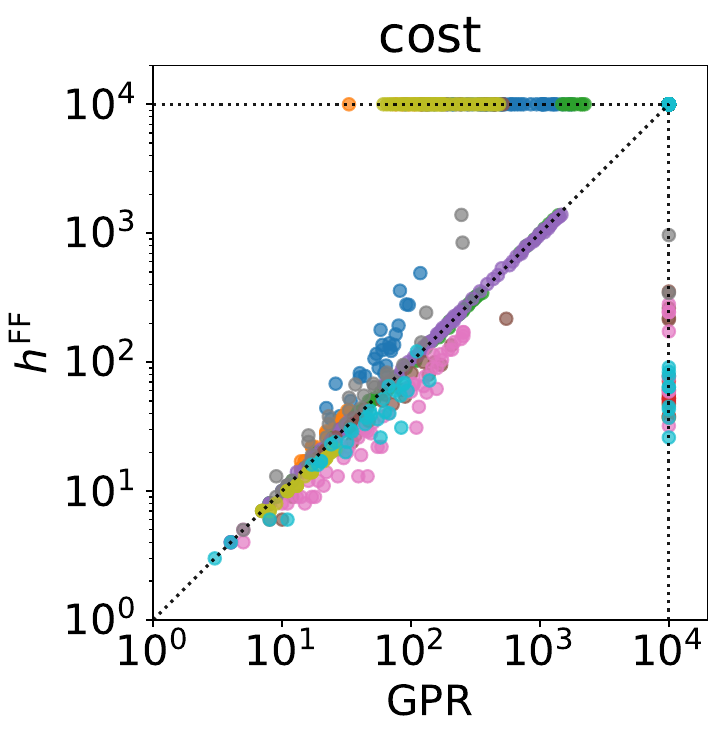}}
\raisebox{-0.5\height}{\includegraphics[width=\smlversuswidth]{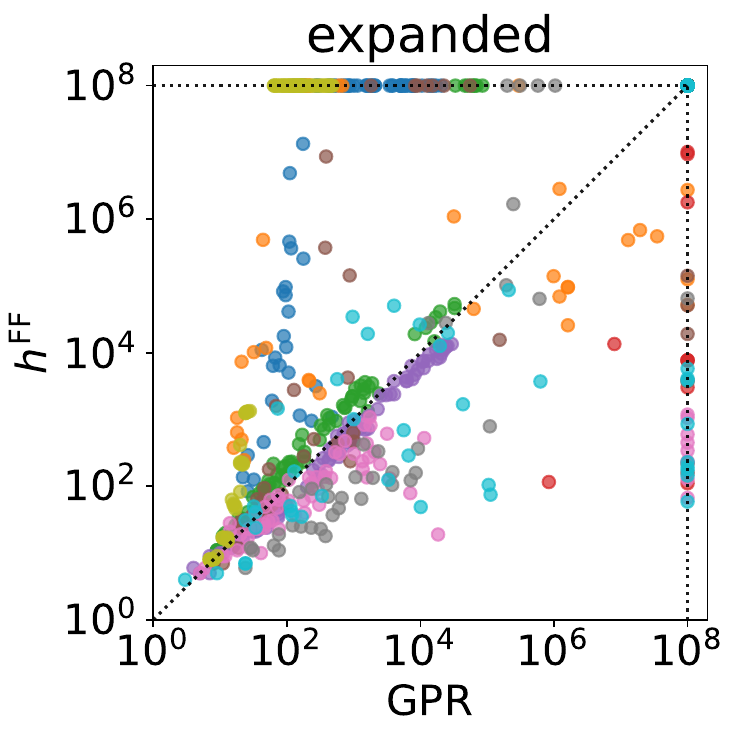}}
\raisebox{-0.45\height}{\includegraphics[width=0.2\columnwidth]{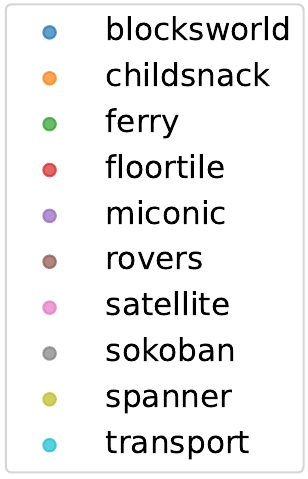}}
\caption{Returned plan cost and number of expanded nodes of \hff{} and \gpr{}.
Problems that were not solved by one planner has their respective metric set to
the axis limit.
Points on the top left triangle favour \gpr{} while points on the bottom right
triangle favour \hff{}.
}
\label{fig:gpr_vs_hff}
\end{figure}

\svr{} and \gpr{} also outperform or tie with \hff{} on 6 and 7 domains, respectively.
We compare \gpr{} and \hff{} in more detail in Fig.~\ref{fig:gpr_vs_hff} by showing plan cost and nodes expanded per problem.
We observe that the better performing planner on a domain generally has better plan quality and fewer node expansions.
An exception is Sokoban where \gpr{} expands more nodes but solves more problems due to its faster heuristic evaluations.
Overall, the domains in which \gpr{} performs worse are domains that require traversing a map which WL features cannot express with limited iterations.

\subsubsection*{Are our methods more computationally efficient to train?}
To answer this question, we compare the training time of GNNs using ILGs, \svr{} and \gpr{}.
Their mean and standard deviation in seconds are 
$77.2\pm33.7$ (\gooseMax), 
$33.7\pm52.6$ (\gooseMean), 
$0.2\pm0.1$ (\svr) and 
$3.8\pm4.6$ (\gpr).
Comparing against the more efficient GNN model per domain, we have that \svr{} is between 187x (Satellite) to 922x (Childsnack) more efficient and \gpr{} is between 8x (Floortile) and 615x (Childsnack) more efficient.
Note that the GNNs have access to GPUs and would take even more time to train on a CPU.
Lastly, the GNNs have between $54529$ and $74561$ number of parameters in the range, while the WLF models have between $108$ and $23202$ parameters.

\subsubsection*{Does kernelising help?}
As commonly done in classical machine learning, we combine our WL features with non-linear kernels to obtain new non-linear features that can increase the expressivity of the regression models.
Unfortunately, as shown in Tab.~\ref{tab:summary}, this generally results in a decrease in the performance of the learned heuristic:
the \svrInf{} model has significantly worse coverage than \svr{} despite theoretically having more expressive implicit features.
The drop in performance can be explained by overfitting to the more expressive features which do not bring any obvious semantic information for planning tasks.

\subsubsection*{Do higher order WL features help?}
The motivation for using higher-order WL features is similar to using higher-order kernels: to introduce more expressive features that may be correlated with the optimal heuristic.
In Tab.~\ref{tab:summary}, we see that the performance of 2-LWL is generally worse on all domains except for Transport.
This again can be attributed to poorer generalisation.
Furthermore, computing the 2-LWL features are slower to generate than WL features as they take time cubic in the size of the ILGs in the worst case, and in the case of Floortile runs out of memory when generating features.
We also note that attempting to generate 3-LWL features causes out of memory problems during training as the size of features generated is extremely large, on the order of $10^7$ and above.

\begin{table}[t]
\tablesize
\setlength{\tabcolsep}{2pt}
\begin{tabularx}{\columnwidth}{l Y Y Y Y Y Y Y Y }
\toprule
& \multicolumn{4}{c}{$h$ error}  
& \multicolumn{4}{c}{Expanded}  
\\
\cmidrule(lr){2-5}  
\cmidrule(lr){6-9}  
Domain  
& easy & medium & hard & all  
& easy & medium & hard & all  
\\
\midrule
blocksworld & \significant{\highcorr{+0.93}} & \significant{\highcorr{+0.90}} & \significant{\highcorr{+0.94}} & \significant{\highcorr{+0.98}} & \notsignificant{\medcorr{+0.32}} & \notsignificant{\lowcorr{+0.22}} & \notsignificant{\medcorr{+0.33}} & \significant{\highcorr{+0.58}}\\
childsnack & \significant{\highcorr{+0.69}} & \significant{\highcorr{+0.93}} & \multicolumn{1}{c}{-} & \significant{\highcorr{+0.87}} & \significant{\highcorr{+0.59}} & \notsignificant{\highcorr{+0.52}} & \multicolumn{1}{c}{-} & \notsignificant{\lowcorr{+0.20}}\\
ferry & \significant{\highcorr{+0.86}} & \significant{\highcorr{+0.98}} & \significant{\highcorr{+0.99}} & \significant{\highcorr{+1.00}} & \significant{\highcorr{+0.86}} & \significant{\highcorr{+0.87}} & \significant{\highcorr{+0.83}} & \significant{\highcorr{+0.93}}\\
floortile & \multicolumn{1}{c}{-} & \multicolumn{1}{c}{-} & \multicolumn{1}{c}{-} & \multicolumn{1}{c}{-} & \multicolumn{1}{c}{-} & \multicolumn{1}{c}{-} & \multicolumn{1}{c}{-} & \multicolumn{1}{c}{-}\\
miconic & \significant{\highcorr{+0.56}} & \significant{\highcorr{+0.67}} & \significant{\highcorr{+0.97}} & \significant{\highcorr{+0.96}} & \significant{\highcorr{+0.55}} & \significant{\highcorr{+0.81}} & \significant{\highcorr{+0.99}} & \significant{\highcorr{+0.99}}\\
rovers & \significant{\highcorr{+0.89}} & \significant{\highcorr{+0.86}} & \multicolumn{1}{c}{-} & \significant{\highcorr{+0.96}} & \notsignificant{\lowcorr{+0.26}} & \notsignificant{\lowcorr{+0.19}} & \multicolumn{1}{c}{-} & \significant{\highcorr{+0.53}}\\
satellite & \significant{\highcorr{+0.73}} & \significant{\highcorr{+0.95}} & \multicolumn{1}{c}{-} & \significant{\highcorr{+0.96}} & \notsignificant{\lowcorr{+0.09}} & \notsignificant{\lowcorr{+0.07}} & \multicolumn{1}{c}{-} & \notsignificant{\lowcorr{+0.18}}\\
sokoban & \notsignificant{\lowcorr{+0.27}} & \significant{\highcorr{+0.86}} & \multicolumn{1}{c}{-} & \significant{\highcorr{+0.96}} & \notsignificant{\lowcorr{+0.26}} & \significant{\highcorr{+0.76}} & \multicolumn{1}{c}{-} & \significant{\highcorr{+0.79}}\\
spanner & \significant{\medcorr{+0.36}} & \significant{\highcorr{+0.53}} & \significant{\highcorr{+0.96}} & \significant{\highcorr{+0.92}} & \significant{\medcorr{+0.43}} & \significant{\highcorr{+0.54}} & \significant{\highcorr{+0.96}} & \significant{\highcorr{+0.92}}\\
transport & \significant{\highcorr{+0.83}} & \multicolumn{1}{c}{-} & \multicolumn{1}{c}{-} & \significant{\highcorr{+0.83}} & \significant{\medcorr{+0.37}} & \multicolumn{1}{c}{-} & \multicolumn{1}{c}{-} & \significant{\medcorr{+0.35}}\\
\bottomrule

\end{tabularx}
\caption{
Pearson's correlation coefficient $\rho$ between standard deviation obtained by
\gpr{} against 
heuristic estimate error and node expansions of initial states from solved problems.
Statistically significant coefficients ($p<0.05$) are highlighted in bold font
and italics otherwise.
Strongly correlated values ($\abs{\rho} \geq 0.5$) are highlighted in green,
medium correlated values ($0.3 \leq \abs{\rho} < 0.5$) in a lighter green, and
low correlation ($\abs{\rho} < 0.3$) in gray.
Entries for which we solved fewer then 10 problems are omitted.
}
\label{tab:correlation}
\end{table}

\subsubsection*{Are Bayesian variance estimates meaningful?}
One advantage of Bayesian models is that by assuming a prior distribution on the weights of our models, we are able to derive uncertainty bounds on the outputs of the learned posterior model.
In Tab.~\ref{tab:correlation}, we analyse the Pearson's correlation coefficient between the standard deviation obtained by \gpr{} and
(1) the error between output mean and $h^*$, and
(2) the number of expanded nodes using the learned heuristic with greedy best first search.
We see that there is a statistically significant strong correlation between the heuristic estimate error and the \gpr{} variance outputs.
This is reasonable given that the derivation of the Bayesian model computes the uncertainty on its output prediction.
The story is different for the number of expansions during search where for easy problems there is no significant correlation depending on the domain.
Interestingly, the correlation is more significant and stronger on harder problems for more domains.
Thus, the Bayesian model is able to determine the difficulty of solving a problem within a domain by looking at the predicted standard deviation for $h(s_0)$ but the quality of this prediction will depend on the domain.

\newcommand{\colA}{c_0}
\newcommand{\colB}{c_1}
\newcommand{\colC}{c_2}
\newcommand{\colD}{c_3}
\newcommand{\colE}{c_4}
\newcommand{\colF}{c_5}
\newcommand{\colG}{c_6}
\newcommand{\colH}{c_7}
\newcommand{\colI}{c_8}

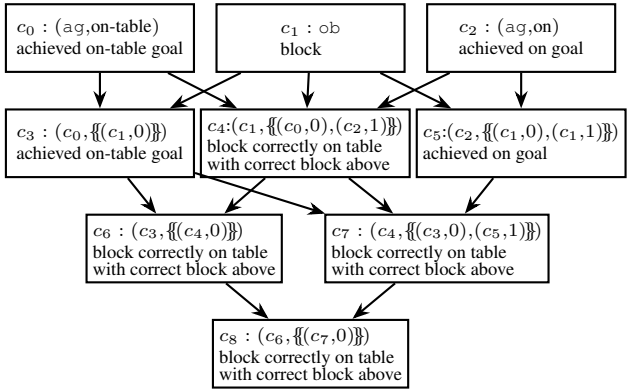
\begin{figure}
    \centering
    \newcommand{\linebreakspacing}{-0.25em}
    \newcommand{\linebreakspacingb}{-0.5em}
    \renewcommand{\xxshift}{1.4}
    \renewcommand{\yyshift}{-1.4}
    \scriptsize
    \begin{tikzpicture}[  
        thick,  
        every node/.style={  
            draw,  
            minimum height=0.9cm,  
            minimum width=2.5cm,   
            align=left, }  
    ]

    \node (a) at (-2*\xxshift,\yyshift) { $\colA{}:
        (\truegoalsymbol,\! \text{on-table})$\\[\linebreakspacing]
        achieved on-table goal 
    }; 
        
    \node (b) at ( 0*\xxshift,\yyshift) { $\colB{}:
        \objectsymbol$ \\[\linebreakspacing]
        block 
    }; 
        
    \node (c) at (+2*\xxshift,\yyshift) { $\colC{}:
        (\truegoalsymbol,\! \text{on})$ \\[\linebreakspacing]
        achieved on goal 
    };

    \node (d) at (-2*\xxshift,2*\yyshift) { $\colD{}:
        (\colA{},\!\msetaa{(\colB{},\!0)}\!)$\\[\linebreakspacing]
        achieved on-table goal 
    }; 
        
    \node (e) at (-0.05*\xxshift,2*\yyshift) { $\colE{}\!\!:\!\!
        (\colB{},\!\msetaa{(\colA{},\!0),\!(\colC{},\!1)}\!)$ \\[\linebreakspacing]
        block correctly on table\\[\linebreakspacingb]
        with correct block above 
    }; 
        
    \node (f) at (+2*\xxshift,2*\yyshift) { $\colF{}\!\!:\!\!
        (\colC{},\!\msetaa{(\colB{},\!0),\!(\colB{},\!1)}\!)$ \\[\linebreakspacing]
        achieved on goal 
    };

    \node (g) at (-1.2*\xxshift,3*\yyshift) { $\colG{}:
        (\colD{},\!\msetaa{(\colE{},\!0)}\!)$ \\[\linebreakspacing]
        block correctly on table\\[\linebreakspacingb]
        with correct block above 
    }; 
        
    \node (h) at ( 1.2*\xxshift,3*\yyshift) { $\colH{}:
        (\colE{},\!\msetaa{(\colD{},\!0),\!(\colF{},\!1)}\!)$ \\[\linebreakspacing]
        block correctly on table\\[\linebreakspacingb]
        with correct block above 
    };

    \node (i) at (0,4*\yyshift) { $\colI{}:
        (\colG{},\!\msetaa{(\colH{},\!0)}\!)$ \\[\linebreakspacing]
        block correctly on table\\[\linebreakspacingb]
        with correct block above 
    };

    \path[-Stealth,draw] (a) -- (d); 
    \path[-Stealth,draw] (a) -- (e); 
    \path[-Stealth,draw] (b) -- (d); 
    \path[-Stealth,draw] (b) -- (e); 
    \path[-Stealth,draw] (b) -- (f); 
    \path[-Stealth,draw] (c) -- (e); 
    \path[-Stealth,draw] (c) -- (f); 
    \path[-Stealth,draw] (d) -- (g);
    \path[-Stealth,draw] (d) -- (h); 
    \path[-Stealth,draw] (e) -- (g); 
    \path[-Stealth,draw] (e) -- (h); 
    \path[-Stealth,draw] (f) -- (h); 
    \path[-Stealth,draw] (g) -- (i); 
    \path[-Stealth,draw] (h) -- (i);

    \end{tikzpicture}
\caption{
    The dependency subgraph of generated WL features on Blocksworld.
    The first row of each node indicates the feature colour, followed by the initial colour the feature corresponds to or the input to the $\hash$ function which generated the colour.
    The second row describes the semantic meaning of the feature.
    Edges describe the dependency of the feature on previous features based
    on the $\hash$ function.
}
\label{fig:achieved_goal_feature}
\end{figure}

\subsection*{Understanding Learned Models}
Another advantage of WL-GOOSE is that its set of features is explainable, and
it is possible to see which features are chosen when using a linear inference
model.
The models can be understood by analysing the features with the highest
corresponding linear weights, and by observing the distribution of such
weights.
The semantic meaning of the features can be understood by examining the \textit{generation} of WL colours.
This can be achieved by representing the observed WL colours as a directed acyclic graph (DAG) where each WL colour is a node and there is a directed edge from $\kappa$ to $\kappa'$ if $\kappa' = \hash(x, M)$ and $x = \kappa$ or $\exists \iota, (\kappa, \iota) \in M$.
We provide an example of how to interpret the learned models by briefly studying the learned \gpr{} model on Blocksworld.
In this domain, a total of 10444 features were generated from the training data and Fig.~\ref{fig:achieved_goal_feature} illustrates the DAG representation of feature $\colI{}$'s generation.
Consider feature $\colE{}$ in Fig.~\ref{fig:achieved_goal_feature}, it computes the number of blocks that are correctly on the table and also have the correct block above it.
We have that $\colE{} = \hash(\colB{}, \mseta{(\colA{}, 0), (\colC{}, 1)})$, meaning that the colour $\colE{}$ is generated from an object node ($\colB{}=\objectsymbol$) which is part of an achieved on-table goal ($\colA{}=(\truegoalsymbol, \text{on-table})$) and achieved on goal ($\colC{}=(\truegoalsymbol, \text{on})$).
The corresponding edge label of the node colours indicate the position of the block object in the proposition indexed from 0.
Thus, blocks $b$ with colour $\colE{}$ are in the first and only argument of on-table and the second argument of on.
This means that the colour $\colE{}$ is assigned to blocks correctly on the table and correctly underneath another block.

Moreover, we observed that certain subsets of features were evaluated to the same value on all training states.
As a result, the same learned weight value was assigned to each feature in these subsets.
This can be seen in Fig.~\ref{fig:achieved_goal_feature} where features $\colE{}$, $\colH{}$, $\colG{}$ and $\colI{}$ are semantically equivalent. The sum of their weight values is $-1.76$, the largest in value from subsets of features.
Thus, the learned weight rewards states satisfying this condition as blocks correctly on the table do not have to be moved.

Note that it is possible for features to evaluate to the same values on the training set but have different semantic meanings because the training set is finite.
For example, in Blocksworld, a training set may satisfy that a block is correctly on the table if and only if it has the correct block above it.
In this case, the count of colours $\colA{}$ and $\colE{}$ would be the same on all states despite not being semantically equivalent.

\section{Conclusion}

We introduced WL-GOOSE, a novel approach that makes use of the efficiency of classical machine learning for learning to plan.
We developed the Instance Learning Graph (\ilg{}), a novel representation of lifted planning tasks and provided a method to generate features for ILGs based on the WL algorithm, agnostic to the downstream model. 
Similar to Description Logic Features for planning, our generated features are agnostic to the learning target and can be used without the need for backpropagation.
Furthermore, some of our models can be trained in a deterministic fashion with minimal parameter tuning in contrast to DL-based approaches.
To validate the benefits of WL-GOOSE, we used two classical SML models, support vector regression (\svr{}) and Gaussian process regression (\gpr{}), to learn domain-specific heuristics and compared them to the state of the art.

The experimental results showed that WL-GOOSE can efficiently and reliably learn domain-specific heuristics from scratch.
Compared to GNNs applied to ILGs, our learned heuristics are up to 3 orders of magnitude times faster to train and have up to 2 orders of magnitude fewer parameters.
Our results also showed that both \svr{} and \gpr{} are the first learned heuristics capable of outperforming \hff{} in terms of total coverage.
Moreover, our learned heuristics outperform or tie with LAMA on 4 domains. To our knowledge, this is the best performance of learned heuristics against LAMA.
We also showed the theoretical connections between our novel feature generation method with Description Logic Features and GNNs.
Our future work agenda includes exploring how to best use the uncertainty bounds provided by \gpr{} to improve search, making use of generated WL features for learning different forms of domain knowledge such as policies, landmarks and sketches~\cite{bonet:geffner:2021}, and combining stronger satisficing search algorithms to further improve the performance of WL-GOOSE.

\section*{Acknowledgements}
Many thanks must go to Simon St\r{a}hlberg for training and evaluating Muninn on GPUs with GBFS.
This work was supported by Australian Research Council grant DP220103815, by the Artificial and Natural Intelligence Toulouse Institute (ANITI) under the grant agreement ANR-19-PI3A-000, and by the European Union's Horizon Europe Research and Innovation program under the grant agreement TUPLES No. 101070149.

\small
\bibliography{references}

\clearpage
\normalsize
\appendix
\section{Summary of Theoretical Results}
This section constitutes for the proof of Cor.~\ref{thm:wl_cannot} by compiling the pairs of counterexamples that are distinguishable by the described feature generation methods for planning. To recall, we have for a set of given parameters $\params$:

\begin{itemize}
    \item $\WL_{\params}\!:\!\Domain \to \R^d$. The WL feature generation
    function described in Sec.~\ref{sec:method}
    \item $\GNN_{\params}\!:\!\Domain \to \R^d$. Message passing graph neural networks acting on ILG representations of planning tasks.
    \item $\DL_{\params}\!:\!\Domain \to \R^d$. Description Logic Feature generators~\cite{martin:geffner:2000}.
    \item $\MUNINN_{\params}\!:\!\Domain \to \R^d$. Architecture from~\citet{stahlberg:etal:2022} for generating features with random node initialisation removed.
\end{itemize}

\subsection{A.1. $\WL$}
From Thm.~\ref{thm:wl_dl}:
\begin{itemize}
    \item $\Pi_1 = \probliftedgeneral{s_0^1}{G}$
    \item $\Pi_2 = \probliftedgeneral{s_0^2}{G}$
    \item $\predicates = \set{Q, W}$
    \item $\objects=\set{a, b}$
    \item $\schemata=\set{o}$, $o = \langle\{x,y\}$, $\{Q(x, y)\}$, $\{W(x, y)\}$, $\emptyset\rangle$
    \item $G = \{W(a, b)$, $W(b, a) \}$
    \item $s_0^1 = \{Q(a,a)$, $Q(b,b)\}$
    \item $s_0^2 = \{Q(a, b)$, $Q(b, a)\}$
\end{itemize}
We have that $h^*(\Pi_1) = \infty$ and $h^*(\Pi_2) = 2$.

\subsection{A.2. $\GNN$}
Same counterexample as $\WL$ by Thm.~\ref{thm:wl_gnn}.

\subsection{A.3. $\DL$}
From Thm.~\ref{thm:wl_dl}:
\begin{itemize}
    \item $\Pi_1 = \probliftedgeneral{s_0^1}{G}$
    \item $\Pi_2 = \probliftedgeneral{s_0^2}{G}$
    \item $\predicates = \{P\}$
    \item $\objects = \{a, b, c, d\}$
    \item $\schemata = \emptyset$
    \item $G = \{P(a,b,c)\}$
    \item $s_0^1 = \{P(a,b,a), P(c,b,c), P(a,d,c), P(c,d,a)\}$
    \item $s_0^2 = \{P(a,b,c), P(c,b,a), P(a,d,a), P(c,d,c)\}$
\end{itemize}
We have that $h^*(\Pi_1) = \infty$ and $h^*(\Pi_2) = 0$.

\subsection{A.4. $\MUNINN = \GNNa^{\mug}$}
From Thm.~\ref{thm:gnn_muninn}:
\begin{itemize}
    \item $\Pi_1 = \probliftedgeneral{s_0^{(1)}}{G}$
    \item $\Pi_2 = \probliftedgeneral{s_0^{(2)}}{G}$
    \item $\predicates = \set{Q}$
    \item $\objects=\set{a, b}$
    \item $\schemata = \emptyset$
    \item $G = \set{Q(a, b), Q(b, a)}$
    \item $s_0^{(1)} = \set{Q(a, a), Q(b,b)}$
    \item $s_0^{(2)} = \set{Q(a, b), Q(b, a)}$
\end{itemize}
We have that $h^*(\Pi_1) = \infty$ and $h^*(\Pi_2) = 0$.

\section{More Detailed Evaluation Results}
\begin{table}[ht!]
    \setlength{\tabcolsep}{3pt}
    \tablesize
\newcommand{\allfirst}[1]{{\underline{#1}}}

\begin{tabularx}{\columnwidth}{l Y Y Y Y Y Y Y Y Y} \toprule 
    &
    \multicolumn{2}{c}{classical} 
    &
    \multicolumn{3}{c}{GNN}
    &
    \multicolumn{4}{c}{WLF}
    \\
    \cmidrule(lr){2-3}  
    \cmidrule(lr){4-6}  
    \cmidrule(lr){7-10}  
    Difficulty 

    \\ \midrule
easy & 280 & 275 & 223 & 256.8 & 250.6 & 264.8 & 234.4 & 229.4 & 265\\
 &  &  &  & ±3.6 & ±2.6 & ±2.9 & ±2.1 & ±1.4 & \\
medium & 190 & 112 & 96 & 109.6 & 105.8 & 152.6 & 4.0 & 91.8 & 161\\
 &  &  &  & ±7.4 & ±10.3 & ±1.0 & ±0.0 & ±2.2 & \\
hard & 87 & 43 & 46 & 46.8 & 41.8 & 73.2 & 0.0 & 7.0 & 76\\
 &  &  &  & ±4.5 & ±2.6 & ±0.7 & ±0.0 & ±0.0 & \\
\midrule
all & 557 & 430 & 365 & 413.2 & 398.2 & 490.6 & 238.4 & 328.2 & 502\\
 &  &  &  & ±10.9 & ±9.8 & ±3.4 & ±2.1 & ±3.1 & \\
\bottomrule

\end{tabularx}

    \caption{
        Coverage of considered planners per difficulty level. 
        The mean and standard deviation are taken for models with multiple repeats marked by $\ddagger$.
    }
    \label{tab:difficulty}
\end{table}

\begin{table}[ht!]
    \setlength{\tabcolsep}{3pt}
    \tablesize
\newcommand{\allfirst}[1]{{\underline{#1}}}

\begin{tabularx}{\columnwidth}{l Y Y Y Y Y Y Y Y Y} \toprule 
&
\multicolumn{2}{c}{classical} 
&
\multicolumn{3}{c}{GNN}
&
\multicolumn{4}{c}{WLF}
\\
\cmidrule(lr){2-3}  
\cmidrule(lr){4-6}  
\cmidrule(lr){7-10}  
Domain 

\\ \midrule
blocksworld & 39.8 & \normalcell{14.1}{} & \normalcell{46.1}{} & \third{60.5}{} & \normalcell{56.7}{} & \second{66.6}{} & \normalcell{17.7}{} & \normalcell{13.8}{} & \first{\allfirst{68.8}}{}\\
childsnack & 22.0 & \normalcell{20.1}{} & \normalcell{12.0}{} & \third{21.8}{} & \normalcell{14.8}{} & \second{23.3}{} & \normalcell{12.5}{} & \normalcell{7.9}{} & \first{\allfirst{26.9}}{}\\
ferry & 62.9 & \normalcell{67.6}{} & \normalcell{35.2}{} & \normalcell{69.8}{} & \third{69.9}{} & \first{\allfirst{75.8}}{} & \normalcell{31.0}{} & \normalcell{59.5}{} & \second{75.7}{}\\
floortile & 10.0 & \first{\allfirst{11.2}}{} & \normalcell{0.9}{} & \normalcell{0.0}{} & \normalcell{0.9}{} & \second{1.8}{} & \normalcell{0.0}{} & \normalcell{0.0}{} & \second{1.8}{}\\
miconic & 81.6 & \normalcell{88.5}{} & \first{\allfirst{90.0}}{} & \normalcell{87.7}{} & \normalcell{85.9}{} & \second{89.0}{} & \normalcell{29.3}{} & \normalcell{65.8}{} & \third{88.9}{}\\
rovers & \allfirst{65.2} & \first{32.7}{} & \normalcell{15.0}{} & \normalcell{21.7}{} & \normalcell{24.4}{} & \second{31.0}{} & \normalcell{23.6}{} & \normalcell{27.0}{} & \third{30.0}{}\\
satellite & \allfirst{87.3} & \first{63.8}{} & \normalcell{13.3}{} & \normalcell{23.9}{} & \normalcell{18.6}{} & \third{36.2}{} & \normalcell{20.5}{} & \normalcell{15.2}{} & \second{43.5}{}\\
sokoban & 29.8 & \normalcell{26.3}{} & \normalcell{21.3}{} & \normalcell{29.8}{} & \third{30.2}{} & \first{\allfirst{34.8}}{} & \normalcell{26.9}{} & \normalcell{27.9}{} & \second{34.7}{}\\
spanner & 30.0 & \normalcell{30.0}{} & \first{\allfirst{76.0}}{} & \normalcell{45.7}{} & \normalcell{36.2}{} & \second{69.2}{} & \normalcell{26.6}{} & \normalcell{47.7}{} & \third{69.0}{}\\
transport & \allfirst{64.0} & \first{39.3}{} & \normalcell{19.2}{} & \normalcell{30.1}{} & \second{35.2}{} & \normalcell{25.9}{} & \normalcell{22.8}{} & \third{32.9}{} & \normalcell{21.9}{}\\
\midrule
IPC score & \allfirst{492.7} & \third{393.5}{} & \normalcell{328.9}{} & \normalcell{391.0}{} & \normalcell{372.8}{} & \second{453.7}{} & \normalcell{210.7}{} & \normalcell{297.8}{} & \first{461.3}{}\\
\bottomrule

\end{tabularx}

    \caption{IPC quality scores of considered planners per domain.
    Models marked $\ddagger$ are run 5 times.
    The top three single-queue heuristic search planners in each row are indicated by the cell colouring intensity, with the best one in bold.
    The best planner overall in each row is underlined.}
    \label{tab:quality}
\end{table}

\begin{figure}[ht!]
    \centering
    \includegraphics[width=0.7\columnwidth]{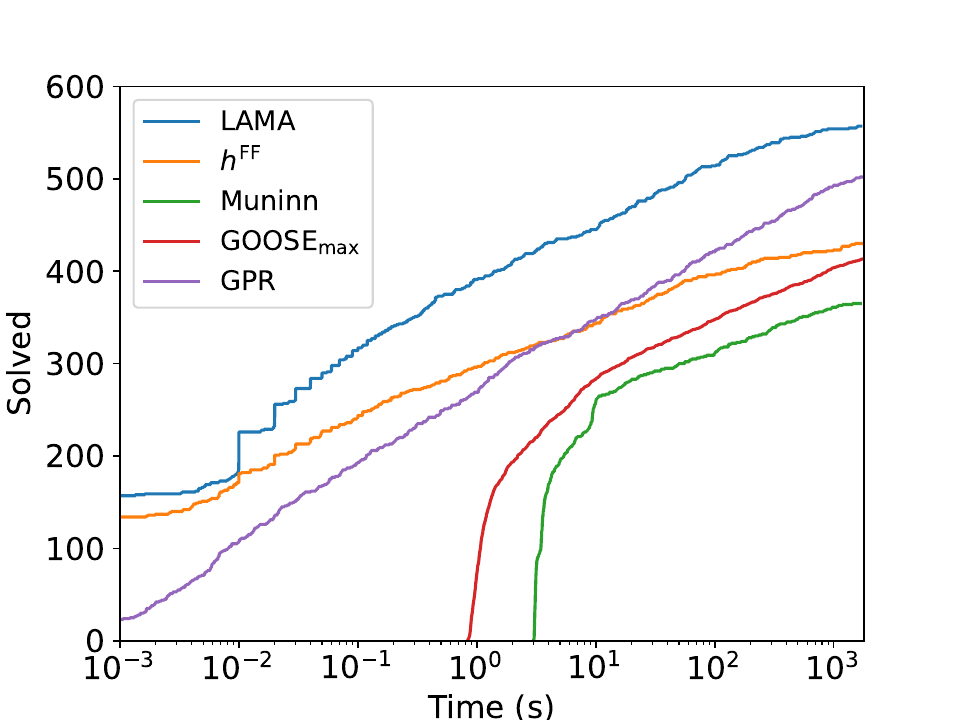}
    \caption{Cumulative coverage over time of selected solvers.}
\end{figure}

\newpage   
\section{Training Results}
\begin{table}[ht!]
    \setlength{\tabcolsep}{3pt}
    \tablesize
    \begin{tabularx}{\columnwidth}{l Y Y Y Y Y Y} \toprule 
&
\multicolumn{2}{c}{GNN}
&
\multicolumn{4}{c}{WLF}
\\
\cmidrule(lr){2-3}
\cmidrule(lr){4-7}
Domain
& \header{GOOSE$_{\text{max}}$} 
& \header{GOOSE$_{\text{mean}}$} 
& \header{\svrOne} 
& \header{\svr$_{\infty}$} 
& \header{\svr$_{\text{2-LWL}}$} 
& \header{\gprNew} 
\\ 
\midrule
blocksworld & 122.6 & 155.9 & 0.3 & 7.9 & 7.8 & 4.3 \\
childsnack & 36.9 & 46.0 & 0.0 & 3.1 & 0.2 & 0.1 \\
ferry & 56.1 & 112.2 & 0.2 & 6.5 & 3.4 & 1.5 \\
floortile & 122.2 & 146.0 & 0.5 & 28.3 & OOM & 14.9 \\
miconic & 46.5 & 50.2 & 0.1 & 95.3 & 0.6 & 0.5 \\
rovers & 100.4 & 88.4 & 0.3 & 18.5 & 7.2 & 7.5 \\
satellite & 49.7 & 97.6 & 0.3 & 13.1 & 7.7 & 1.8 \\
sokoban & 102.1 & 197.2 & 0.2 & 3.6 & 1.4 & 1.7 \\
spanner & 90.4 & 66.2 & 0.1 & 35.6 & 0.8 & 5.3 \\
transport & 44.7 & 41.9 & 0.1 & 0.6 & 2.1 & 0.5 \\
\midrule
all & 77.2 & 100.2 & 0.2 & 21.3 & 3.1 & 3.8 \\
    & ±33.7 & ±52.6 & ±0.1 & ±28.4 & ±3.2 & ±4.6 \\
\bottomrule
\end{tabularx}

    \caption{
        Average training time in seconds for various learning for planning models over each domain.
        The final row takes the standard deviation of training time over all domains.
        OOM denotes the memory limit was exceeded when trying to train the model.
    }
    \label{tab:train_time}
\end{table}

\begin{table}[ht!]
    \setlength{\tabcolsep}{3pt}
    \tablesize
    \begin{tabularx}{\columnwidth}{l Y Y Y} \toprule 
Domain & GNN & WL & improvement
\\ \midrule
blocksworld & 54721 & 10444 & 5 \\
childsnack & 56257 & 251 & 224 \\
ferry & 54529 & 3228 & 17 \\
floortile & 55681 & 7616 & 7 \\
miconic & 54913 & 108 & 508 \\
rovers & 74561 & 23202 & 3 \\
satellite & 55297 & 22155 & 2 \\
sokoban & 70913 & 110 & 645 \\
spanner & 54913 & 350 & 157 \\
transport & 54721 & 3787 & 14 \\
\bottomrule
\end{tabularx}

    \caption{
        Number of parameters per model class per domain, and the improvement on parameter efficiency of the WL models over the GNN models.
    }
    \label{tab:params}
\end{table}

\end{document}